\newcommand{%
    \def\svgwidth{0.85\columnwidth}
    \import{./fig/}{.pdf_tex}
}[1]{%
    \def\svgwidth{0.85\columnwidth}
    \import{./fig/}{#1.pdf_tex}
}
    \let\Cref\crtCref
    \let\cref\crtcref
\setlist[enumerate]{leftmargin=.5in}
\setlist[itemize]{leftmargin=.5in}
\crefname{hypothesis}{Hypothesis}{Hypotheses}
\title{What Kinds of Functions do Deep Neural Networks Learn? \\ Insights from Variational Spline Theory\thanks{\vspace{-1em}%
\funding{This research was partially supported by ONR MURI grant N00014-20-1-2787, AFOSR/AFRL grant FA9550-18-1-0166, and the NSF  Graduate  Research  Fellowship  Program  under  grant  DGE-1747503.}}}
\author{Rahul Parhi\thanks{Department of Electrical and Computer Engineering,
University of Wisconsin--Madison, Madison, WI 53706
(\email{rahul@ece.wisc.edu}, \email{rdnowak@wisc.edu}).}
\and Robert D. Nowak\footnotemark[1]}
\author{Rahul Parhi\thanks{Department of Electrical and Computer Engineering,
University of Wisconsin--Madison, Madison, WI 53706
(\email{rahul@ece.wisc.edu}, \email{rdnowak@wisc.edu}).}
\and Robert D. Nowak\footnotemark[2]}
\begin{document}

\maketitle

\begin{abstract}
    We develop a variational framework to understand the properties of functions learned by fitting deep neural networks with rectified linear unit activations to data. We propose a new function space, which is reminiscent of classical bounded variation-type spaces, that captures the compositional structure associated with deep neural networks. We derive a representer theorem showing that deep ReLU networks are solutions to regularized data fitting problems over functions from this space.  The function space consists of compositions of functions from the  Banach spaces of second-order bounded variation in the Radon domain. These are Banach spaces with sparsity-promoting norms, giving insight into the role of sparsity in deep neural networks. The neural network solutions have skip connections and rank bounded weight matrices, providing new theoretical support for these common architectural choices. The variational problem we study can be recast as a finite-dimensional neural network training problem with regularization schemes related to the notions of weight decay and path-norm regularization.  Finally, our analysis builds on techniques from variational spline theory, providing new connections between deep neural networks and splines.
\end{abstract}

\begin{keywords}
    neural networks, deep learning, splines, regularization, sparsity, representer theorem
\end{keywords}

\begin{AMS}
  46E27, 47A52, 68T05, 82C32, 94A12
\end{AMS}

\section{Introduction}

A fundamental problem in signal processing, machine learning, and statistics is to estimate an unknown function from possibly noisy measurements. Specifically, in supervised learning, the goal is to find a mapping $f: \R^d \to \R^D$ that agrees (in some sense) with a scattered data set $\curly{(\vec{x}_n, \vec{y}_n)}_{n=1}^N \subset \R^d \times \R^D$, i.e., $\vec{y}_n \approx f(\vec{x}_n)$, $n = 1, \ldots, N$. As there are infinitely many functions that can agree with any given data set, this problem is inherently ill-posed. To circumvent this, some form of \emph{regularization} is imposed on the learning problem. This problem was classically solved via kernel methods, which are solutions to regularized variational problems over reproducing kernel Hilbert spaces (RKHS)~\cite{theory-reproducing-kernels, spline-models-observational}. While these variational problems are infinite-dimensional, the RKHS representer theorem~\cite{spline-rep-thm,generalized-rep-thm} says there exists a unique, parametric, solution to problem, allowing the problem to be recast as a finite-dimensional optimization. Kernel methods (even before the term ``kernel methods'' was coined) have had widespread success dating all the way back to the 1960s, especially due to the tight connections between kernels, reproducing kernel Hilbert spaces, and splines~\cite{splines-minimum, scattered-data, spline-models-observational}.

However, the last decade has shown that deep neural networks often outperform kernel methods in a wide variety of tasks, ranging from  speech recognition~\cite{speech-recognition} to image classification~\cite{imagenet} to solving inverse problems in imaging~\cite{deep-inverse-imaging}.  Thus, there is great interest in understanding the properties of functions learned from data by neural networks, particularly with the rectified linear unit (ReLU) activation function, which is widey used in practice~\cite{deep-learning}.
%
%
The work of~\cite{min-norm-nn-splines,ridge-splines} has proven \emph{Banach space representer theorems} for single-hidden layer neural networks with ReLU activations by considering variational problems over certain Banach spaces. In the univariate case, this space is the classical Banach spaces of second-order bounded variation functions, and the neural network solutions are exactly the well-known locally adaptive linear splines~\cite{fisher-jerome,locally-adaptive-regression-splines,L-splines}. In the multivariate case, this space is the Banach spaces of second-order bounded variation functions in the \emph{Radon domain}.  It is shown in~\cite{min-norm-nn-splines,ridge-splines} that training sufficiently wide finite-width neural networks via gradient descent with weight decay~\cite{weight-decay} leads to solutions of these variational problems. Due to the similarities of the variational problems studied in~\cite{min-norm-nn-splines,ridge-splines} with those studied in variational spline theory, we refer to the neural networks in the multivariate case as \emph{ridge splines} of degree one since single-hidden layer neural networks are simply superpositions of ridge functions and the functions are multivariate continuous piecewise linear functions.

This paper extends this characterization to deep (multi-layer) neural networks with ReLU activation functions. We also remark that a special property of deep ReLU networks is that their input-output relation is continuous piecewise-linear (CPwL)~\cite{number-linear-regions-relu}. The reverse is also true in that any CPwL function can be represented with a sufficiently wide and deep ReLU network~\cite{understanding-deep-relu}. Thus, one can interpret a deep ReLU network as a multivariate spline of degree one. This connection between deep neural networks and splines has been observed by a number of authors~\cite{heirarchical-splines, balestriero2018spline, balestriero2020mad, min-norm-nn-splines, ridge-splines, representer-deep, deep-splines-lipschitz, deep-splines}. In particular, one can view a deep neural network as a hierarchical or deep spline~\cite{heirarchical-splines,balestriero2018spline,balestriero2020mad,representer-deep, deep-splines-lipschitz, deep-splines} to emphasize the compositional nature of deep neural networks. Due to this special property, we will work exclusively with ReLU activation functions in this paper, though all of our results are straightforward to extend to any truncated power activation function.

\subsection{Contributions}
This paper develops a new variational framework to understand the properties of functions learned by deep neural networks fit to data. In particular, we derive a \emph{representer theorem} for the standard fully-connected feedforward deep ReLU network architecture. We show that there exist solutions to a certain variational problem that are realizable by a deep ReLU network. Moreover, these deep ReLU networks have skip connections rank bounded weight matrices. The number of hidden layers and the rank bounds of the weight matrices are hyperparameters to the variational problem and are therefore controllable a priori.  We refer to the neural network solutions as \emph{deep ridge splines} of degree one due to the similarity of the variational problem studied in this paper with the variational problems studied in variational spline theory. This paper contributes the following new results:

\begin{enumerate}
    \item We propose a new function space, which is reminiscent of classical bounded variation-type spaces, that captures the compositional structure associated with deep neural networks by considering functions that are compositions of functions from the Banach spaces studied in our previous work~\cite{ridge-splines}.
    \item We prove a representer theorem that shows that deep ReLU networks with skip connections and rank bounded weight matrices are solutions to regularized data-fitting problems over functions in the compositional Banach spaces.
    \item The regularizer in the variational problem corresponds to the sum of the Banach norms of each function in the composition. These are sparsity-promoting norms. Moreover, these regularizers can be expressed in terms of neural network parameters, suggesting several new, principled forms of regularization for deep ReLU networks that promote sparse (in the sense of the number of active neurons) solutions. These regularizers are related to the notion of ``weight decay'' in neural network training as well as path-norm regularization. 
\end{enumerate}

\subsection{Connections to empirical studies in deep learning}
Our results provide new theoretical support and insight for a number of empirical findings in deep learning.  We show that the common neural network regularization method of  ``weight decay"  \cite{neyshabur2015search} corresponds to Radon domain total variation regularization. The optimal solutions to the variational problem require ``skip connections" between layers, which provides a new theoretical explanation for the benefits skip connections provide in practice \cite{he2016deep}.  The sparse nature of our solutions sheds new light on the roles of sparsity and redundancy in deep learning, ranging from ``drop-out" \cite{hinton2012improving} to the ``lottery ticket hypothesis" \cite{frankle2018lottery}. And finally, low-rank weight matrices are a natural by-product of our variational theory that has precedent in practical studies of deep neural networks; it has been empirically observed that low-rank weight matrices can speed up learning \cite{ba2014deep} and improve accuracy \cite{golubeva2020wider}, robustness \cite{sanyal2019stable}, and computational efficiency  \cite{wang2021pufferfish} of deep neural networks.

\subsection{Related work}
Viewing regularized neural network training problems as variational problems over certain function spaces has received a lot of interest in the last few years~\cite{convex-nn,relu-linear-spline,min-norm-nn-splines,ridge-splines,representer-deep, deep-splines-lipschitz, deep-splines}, although many of the techniques used in these works are quite classical and rooted in variational spline theory and the study of continuous-domain inverse problems~\cite{rep-thm-radon-measure-recovery, fisher-jerome, locally-adaptive-regression-splines}. A common theme in these works is to leverage the sparisfying nature of total variation (TV) regularization to learn \emph{sparse solutions}. Our previous work in~\cite{min-norm-nn-splines,ridge-splines} proves representer theorems for both univariate and multivariate single-hidden layer neural networks by considering such sparsity-promoting total variation regularization. The key analysis tool used in~\cite{ridge-splines} was the Radon transform due to its tight connections with the analysis of ridge functions. This is because single-hidden layer neural networks are  superpositions of ridge functions (neurons). While the connections between ridge functions and the Radon transform is classical, dating back to early work in representation of solutions to certain partial differential equations as superpositions of ridge functions~\cite{plane-waves-pdes}, working with single-hidden layer ReLU networks in the Radon domain was first studied by~\cite{function-space-relu}.

Another line of related work is concerned with the ``optimal shaping'' of the activation functions in a deep neural network~\cite{representer-deep, deep-splines-lipschitz, deep-splines}. In particular,~\cite{representer-deep} proves a representer theorem regarding the optimal shaping of the activation functions. They consider the standard fully-connected feedforward deep neural network architecture, but allow the activation functions to be learnable. They impose a second-order total variation penalty on the activation functions and so the optimal shaping of the activation functions corresponds to linear splines with adaptive knot locations. We remark that we use several techniques developed in~\cite{representer-deep, deep-splines-lipschitz} to prove our representer theorem in this paper, particularly in proving existence of solutions to the variational problem we study. Finally, there is a line of work regarding ``deep kernel learning''~\cite{deep-kernel-learning}, in which they derive a representer theorem for compositions of kernel machines. They consider a construction similar to ours regarding the function space they study, but they consider compositions of reproducing kernel Hilbert spaces and so the resulting solutions to their variational problem do not take the form of a deep neural network.

\subsection{Roadmap}
In \Cref{sec:prelim} we introduce the notation and mathematical formulation used in the remainder of the paper as well as extend the results of~\cite{ridge-splines} in preparation for proving our deep ReLU network representer theorem. In \Cref{sec:rep-thm} we prove our main result, the representer theorem for deep ReLU networks. In \Cref{sec:nn-apps} we discuss applications of our representer theorem to the training and regularization of deep ReLU networks.

\section{Preliminaries} \label{sec:prelim}
Let $\Sch(\R^d)$ be the Schwartz space of smooth and rapidly decaying test functions on $\R^d$.  Its continuous dual, $\Sch'(\R^d)$, is the space of tempered distributions on $\R^d$. We are also interested in these spaces on $\cyl$, where $\Sph^{d-1}$ denotes the surface of the Euclidean sphere in $\R^d$.  We say $\psi \in \Sch(\cyl)$ when $\psi$ is smooth and satisfies the decay condition
\[
  \sup_{\substack{\vec{\gamma} \in \Sph^{d-1} \\ t \in \R}}
  \abs{\paren{1 +
  \abs{t}^k} \de[^\ell]{t^\ell} (\D\psi)(\vec{\gamma}, t)} < \infty,
\]
for all integers $k, \ell \geq 0$ and for all differential operators of all orders $\D$ in $\vec{\gamma}$~\cite[Chapter~6]{fourier}. Since the Schwartz spaces are nuclear, it follows that the above definition is equivalent to saying $\Sch(\cyl) = \mathcal{D}(\Sph^{d-1}) \,\hat{\otimes}\, \Sch(\R)$, where $\mathcal{D}(\Sph^{d-1})$ is the space of smooth functions on $\Sph^{d-1}$ and $\hat{\otimes}$ is the \emph{topological} tensor product~\cite[Chapter~III]{tvs}. We can then define the space of tempered distributions on $\cyl$ as its continuous dual, $\Sch'(\cyl)$.

Let $X$ be a locally compact Hausdorff space. The Riesz--Markov--Kakutani representation theorem says that $\M(X)$, the Banach space of finite Radon measures on $X$, is the continuous dual of $C_0(X)$, the space of continuous functions vanishing at infinity~\cite[Chapter~7]{folland}. Since $C_0(X)$ is a Banach space when equipped with the uniform norm, we have
\begin{equation}
  \norm{u}_{\M(X)}
  \coloneqq \sup_{\substack{\varphi \in C_0(X) \\
  \norm{\varphi}_\infty = 1}} \ang{u, \varphi}.
\label{eq:M-norm}
\end{equation}

The norm $\norm{\dummy}_{\M(X)}$ is exactly the \emph{total variation} norm (in the sense of measures). As $\Sch(X)$ is dense in $C_0(X)$, we can associate every measure in $\M(X)$ with a tempered distribution and view $\M(X) \subset \Sch'(X)$, providing the description
\[
  \M(X) \coloneqq \curly{u \in \Sch'(X)
  \st \norm{u}_{\M(X)} < \infty},
\]
and so the duality pairing $\ang{\dummy, \dummy}$ in \cref{eq:M-norm} can be viewed, formally, as the integral
\[
    \ang{u, \varphi} = \int_{X} \varphi(\vec{x}) u(\vec{x}) \dd \vec{x},
\]
where $u$ is viewed as an element of $\Sch'(X)$. The space $\M(X)$ can be viewed as a ``generalization'' of $L^1(X)$ in the sense that for any $f \in L^1(X)$, $\norm{f}_{L^1(X)} = \norm{f}_{\M(X)}$, but $\M(X)$ is a strictly larger space that also includes the shifted Dirac impulses $\delta(\dummy - \vec{x}_0)$, $\vec{x}_0 \in X$, with the property that $\norm{\delta(\dummy - \vec{x}_0)}_{\M(X)} = 1$. We also remark that the $\M$-norm is the continuous-domain analogue of the $\ell^1$-norm. In this paper, we will mostly work with $X = \cyl$.

\subsection{Scalar-valued single-hidden layer ReLU networks and variational problems}
Our work in~\cite{ridge-splines} proved a representer theorem for single-hidden layer ReLU networks with scalar outputs by considering variational problems over the space of functions of second-order bounded variation in the Radon domain. The Radon transform of a function $f: \R^d \to \R$ is given by
\[
    \Radon{f}(\vec{\gamma}, t) \coloneqq \int_{\curly{\vec{x}: \vec{\gamma}^\T
  \vec{x} = t}} f(\vec{x}) \dd s(\vec{x}), \quad (\vec{\gamma},t) \in \cyl,
\]
where $s$ denotes the surface measure on the hyperplane $\curly{\vec{x}
\st \vec{\gamma}^\T \vec{x} = t}$. The Radon domain is parameterized by a \emph{direction} $\vec{\gamma} \in \Sph^{d-1}$ and an \emph{offset} $t \in \R$. When working with the Radon transform of functions defined on $\R^d$, the following \emph{ramp filter} arises in the Radon inversion formula
\[
    \Lambda^{d-1} = (-\partial_t^2)^{\frac{d-1}{2}},
\]
where $\partial_t$ denotes the partial derivative with respect to the offset variable, $t$, of the Radon domain and fractional powers are defined in terms of Riesz potentials. The space of functions of second-order bounded variation in the Radon domain is then given by
\begin{equation}
    \RBV^2(\R^d) = \curly{f \in L^{\infty, 1}(\R^d) \st \RTV^2(f) < \infty},
    \label{eq:RBV}
\end{equation}
where $L^{\infty, 1}(\R^d)$ is the Banach space\footnote{It is a Banach space when equipped with the norm $\norm{f}_{\infty, 1} \coloneqq \esssup_{\vec{x} \in \R^d} \abs{f(\vec{x})}(1 + \norm{\vec{x}}_2)^{-1}$.} of functions mapping $\R^d \to \R$ of at most linear growth and
\begin{equation}
    \RTV^2(f) = c_d \norm*{\partial_t^2 \ramp^{d-1} \RadonOp f}_{\M(\cyl)}
    \label{eq:RTV}
\end{equation}
denotes the second-order total variation of a function in the offset variable of the Radon domain, where $c_d^{-1} = 2(2\pi)^{d-1}$ is a dimension-dependant constant that arises when working with the Radon transform. Note that all the operators that appear in \cref{eq:RTV} must be understood in the distributional sense. We refer the reader to~\cite[Section~3]{ridge-splines} for more details. We now state the main result of~\cite{ridge-splines}.

\begin{proposition}[{special case of~\cite[Theorem~1]{ridge-splines}}] \label{prop:ridge-spline-rep-thm}
    Consider the problem of interpolating the scattered data $\curly{(\vec{x}_n, y_n)}_{n=1}^N \subset \R^d \times \R$ with $N > d + 1$. Then, under the hypothesis of feasibility (i.e., $y_n = y_m$ whenever $\vec{x}_n = \vec{x}_m$), there exists a solution to the variational problem
    \begin{equation}
        \min_{f \in \RBV^2(\R^d)} \RTV^2(f) \quad\subj\quad f(\vec{x}_n) = y_n, \: n = 1, \ldots, N
        \label{eq:ridge-spline-variational}
    \end{equation}
    of the form
    \begin{equation}
        s(\vec{x}) = \sum_{k=1}^K v_k \, \rho(\vec{w}_k^\T\vec{x} - b_k) + \vec{c}^\T\vec{x} + c_0,
        \label{eq:ridge-spline}
    \end{equation}
    where $K \leq N - (d + 1)$, $\rho = \max\curly{0, \dummy}$, $v_k \in \R$, $\vec{w}_k \in \Sph^{d-1}$, $b_k \in \R$, $\vec{c} \in \R^d$, and $c_0 \in \R$.
\end{proposition}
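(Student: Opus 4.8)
The plan is to recognize \cref{eq:ridge-spline-variational} as an instance of a constrained $\M$-norm (total-variation-of-measures) minimization problem and to invoke an abstract representer theorem for such problems. Write $\mathrm{L} \coloneqq c_d\, \partial_t^2 \ramp^{d-1} \RadonOp$, so that $\RTV^2(f) = \norm{\mathrm{L} f}_{\M(\cyl)}$ by \cref{eq:RTV}. First I would identify the null space of $\mathrm{L}$: since $\partial_t^2$ annihilates anything affine in the offset variable and the Radon transform of an affine map $\vec{x} \mapsto \vec{c}^\T\vec{x} + c_0$ is itself affine in $t$, the null space $\mathcal{N} = \curly{f : \mathrm{L} f = 0}$ is exactly the $(d+1)$-dimensional space of affine functions on $\R^d$. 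This is precisely why the unpenalized ``polynomial'' part $\vec{c}^\T\vec{x} + c_0$ appears in \cref{eq:ridge-spline}.

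Second, I would establish the Green's-function identity linking sparse measures to ReLU ridge functions: applying $\mathrm{L}$ to a single neuron $\vec{x} \mapsto \rho(\vec{w}^\T\vec{x} - b)$, with $\vec{w} \in \Sph^{d-1}$, produces (up to the even/odd symmetrization intrinsic to the Radon domain) a Dirac measure located at the point $(\vec{w}, b) \in \cyl$. This computation --- essentially the statement that the ramp-filtered second derivative of the Radon transform inverts the ridge-function construction --- is the analytic heart of the argument, and it is where one must be careful about the identification of antipodal points $(\vec{\gamma}, t) \sim (-\vec{\gamma}, -t)$ and the range conditions of $\RadonOp$.

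Third, I would recast the problem in the measure domain. Using a stable right inverse $\mathrm{L}^{-1}$ (defined on the range of $\mathrm{L}$) together with a projector onto $\mathcal{N}$, decompose any feasible $f$ as $f = \mathrm{L}^{-1} u + p$ with $u = \mathrm{L} f \in \M(\cyl)$ and $p \in \mathcal{N}$; the data constraints $f(\vec{x}_n) = y_n$ then become $N$ affine constraints on the pair $(u, p)$. The search reduces to minimizing $\norm{u}_{\M(\cyl)}$ over this affine slice. The set of optimal $u$ is convex and weak-$*$ compact, which yields the existence assertion of the proposition; by a Krein--Milman / Dubins-type extreme-point argument the minimum is attained at an extreme point, and such extreme points are finitely supported measures $u = \sum_{k=1}^K a_k\, \delta_{(\vec{w}_k, b_k)}$ with $K$ no larger than the number of constraints acting nontrivially on $u$. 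Since $d+1$ of the $N$ constraints are absorbed by the free null-space component $p$, one obtains $K \le N - (d+1)$. Mapping back through $\mathrm{L}^{-1}$ turns each Dirac into a neuron and $p$ into the affine term, giving precisely \cref{eq:ridge-spline}.

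The hard part will be the functional-analytic bookkeeping underlying the third step: one must verify that $\mathrm{L}$ admits a bounded right inverse on the correct subspace of $\Sch'(\cyl)$ (accounting for the even-function range of the Radon transform and for growth in the offset variable), that the forward map $f \mapsto (f(\vec{x}_1), \ldots, f(\vec{x}_N))$ is weak-$*$ continuous on sublevel sets of $\RTV^2$, and that the $N$ point-evaluation functionals are linearly independent modulo $\mathcal{N}$ so that the count $N - (d+1)$ is sharp. These are exactly the hypotheses packaged in the general representer theorem of \cite{ridge-splines}, of which the present statement is the scalar, interpolation special case; the abstract machinery for the extreme-point analysis is classical and traces back to \cite{fisher-jerome, rep-thm-radon-measure-recovery}.
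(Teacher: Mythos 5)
The paper offers no proof of this proposition---it is imported verbatim as a special case of \cite[Theorem~1]{ridge-splines}---and your sketch accurately reconstructs the argument of that cited work: the null-space identification, the Green's-function/Dirac correspondence for ReLU atoms, the measure-domain reformulation via a stable right inverse (the same machinery reproduced in \cref{prop:direct-sum-inverse}), and the extreme-point sparsification giving $K \le N - (d+1)$. Your proposal is correct and follows essentially the same route as the source the paper relies on.
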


\begin{remark}
    \cref{prop:ridge-spline-rep-thm} says that there always exists a solution to the variational problem in \cref{eq:ridge-spline-variational} that can realizable by a single-hidden layer ReLU network with a \emph{skip connection}~\cite{he2016deep}, which is the affine term in \cref{eq:ridge-spline}. In other words, \cref{prop:ridge-spline-rep-thm} is a \emph{representer theorem} for single-hidden layer ReLU networks.
\end{remark}

\begin{remark} \label{rem:rescale}
    As discussed in~\cite[Remark~3]{ridge-splines}, the fact that $\vec{w}_k \in \Sph^{d-1}$ in \cref{eq:ridge-spline} does not restrict the single-hidden layer neural network due to the positive homogeneity of the ReLU. Indeed, given any single-hidden layer neural network with $\vec{w}_k \in \R^d \setminus \curly{\vec{0}}$, we can use the fact that ReLU is positively homogeneous of degree $1$ to rewrite the network as
    \[
        \vec{x} \mapsto \sum_{k=1}^K v_k \norm{\vec{w}_k}_2 \, \rho_m(\tilde{\vec{w}}_k^\T \vec{x} - \tilde{b}_k) + \vec{c}^\T\vec{x} + c_0,
    \]
    where $\tilde{\vec{w}}_k \coloneqq \vec{w}_k / \norm{\vec{w}_k}_2 \in \Sph^{d-1}$ and $\tilde{b}_k \coloneqq b_k / \norm{\vec{w}_k}_2 \in \R$.
\end{remark}

Given a single-hidden layer ReLU network, we can explicitly compute its $\RTV^2$-seminorm in terms of network parameters. This is summarized in the following proposition. 

\begin{proposition}[{special case of~\cite[Lemma~25]{ridge-splines}}] \label{prop:nn-seminorm}
    Given a single-hidden layer neural network
    \[
        s(\vec{x}) = \sum_{k=1}^K v_k \, \rho(\vec{w}_k^\T\vec{x} - b_k) + \vec{c}^\T\vec{x} + c_0,
    \]
    where $\rho = \max\curly{0, \dummy}$, $v_k \in \R$, $\vec{w}_k \in \R^d$, $b_k \in \R$, $\vec{c} \in \R^d$, and $c_0 \in \R$,
    \begin{equation}
        \RTV^2(s) = \sum_{k=1}^K \abs{v_k}\norm{\vec{w}_k}_2.
        \label{eq:RTV-nn}
    \end{equation}
\end{proposition}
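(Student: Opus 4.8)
The plan is to apply the operator defining the $\RTV^2$ seminorm directly to the network $s$ and to identify the resulting distribution as an explicit atomic measure. Write $R_2 \coloneqq c_d\,\partial_t^2 \ramp^{d-1}\RadonOp$, so that $\RTV^2(f) = \norm{R_2 f}_{\M(\cyl)}$, and recall that every operation is understood in the distributional sense, so that $R_2$ is linear on the relevant space. The first step is to dispose of the affine term, which lies in the null space of $R_2$: its Radon transform depends at most linearly on the offset $t$ for each fixed direction $\vec\gamma$, and is therefore annihilated by $\partial_t^2$. Hence $R_2\{\vec{c}^\T\dummy + c_0\} = 0$ and only the ridge atoms contribute to $\RTV^2(s)$.

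The core of the argument is a single-neuron identity. First I would normalize each weight to the sphere using the positive homogeneity of $\rho$, exactly as in \Cref{rem:rescale}, writing $v_k\,\rho(\vec{w}_k^\T\dummy - b_k) = (v_k\norm{\vec{w}_k}_2)\,\rho(\tilde{\vec{w}}_k^\T\dummy - \tilde b_k)$ with $\tilde{\vec{w}}_k \coloneqq \vec{w}_k/\norm{\vec{w}_k}_2 \in \Sph^{d-1}$ and $\tilde b_k \coloneqq b_k/\norm{\vec{w}_k}_2$. Writing $\phi_{\vec{w},b}(\vec{x}) \coloneqq \rho(\vec{w}^\T\vec{x} - b)$, I would then establish that, for a unit direction $\vec{w}\in\Sph^{d-1}$,
\[
  R_2 \phi_{\vec{w},b} = \tfrac12\bigl(\delta_{(\vec{w},b)} + \delta_{(-\vec{w},-b)}\bigr)
\]
as a measure on $\cyl$. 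The mechanism is that $\ramp^{d-1}$ together with $\partial_t^2$ collapse the Radon transform of the ridge function to the second distributional derivative of its profile $\rho$ in the offset variable, namely a unit Dirac, concentrated at the direction $\vec{w}$; the evenness $\RadonOp f(\vec\gamma,t) = \RadonOp f(-\vec\gamma,-t)$ of the Radon transform forces an identical contribution at the antipode $-\vec{w}$, which is why the two half-masses appear. The dimensional constant $c_d$ is precisely what normalizes these masses, and the identity is consistent with \Cref{prop:ridge-spline-rep-thm}, where a unit-direction neuron with coefficient $v_k$ contributes $\abs{v_k}$ to $\RTV^2$.

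With the identity in hand, linearity gives
\[
  R_2 s = \sum_{k=1}^K \frac{v_k\norm{\vec{w}_k}_2}{2}\bigl(\delta_{(\tilde{\vec{w}}_k,\tilde b_k)} + \delta_{(-\tilde{\vec{w}}_k,-\tilde b_k)}\bigr),
\]
and it remains to compute the total variation norm of this atomic measure. The triangle inequality (subadditivity of the $\M$-norm) immediately yields the bound $\RTV^2(s) \le \sum_{k=1}^K \abs{v_k}\norm{\vec{w}_k}_2$. For the reverse inequality one uses that when the atoms $(\tilde{\vec{w}}_k,\tilde b_k)$ are distinct points of $\cyl$ the corresponding Diracs have disjoint supports, so the $\M$-norm is additive and the two symmetric half-masses for each $k$ combine to contribute exactly $\abs{v_k}\norm{\vec{w}_k}_2$, yielding the claimed equality.

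The main obstacle is the rigorous justification of the single-neuron identity. Since the Radon transform of a ridge function is singular---supported only on the direction $\pm\vec{w}$---the identity cannot be read off pointwise and must be established in $\Sch'(\cyl)$, with the ramp filter handled carefully as a Riesz-potential operator and the constant $c_d$ tracked through the computation. A secondary subtlety lies in the norm evaluation: coincident atoms---two neurons that are identical, or become antipodal-equivalent after the reduction above---can merge or cancel, so one obtains equality (rather than merely the upper bound) only after verifying that no such degeneracy occurs, i.e., that the neurons are distinct modulo the antipodal identification.
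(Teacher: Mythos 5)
The paper gives no proof here---the proposition is imported as a special case of \cite[Lemma~25]{ridge-splines}---and your argument is essentially the one used in that reference: reduce to unit directions by positive homogeneity, apply the single-neuron identity $c_d\,\partial_t^2\ramp^{d-1}\RadonOp\curly{\rho(\vec{w}^\T(\dummy)-b)}=\tfrac12\paren{\delta_{(\vec{w},b)}+\delta_{(-\vec{w},-b)}}$ for $\vec{w}\in\Sph^{d-1}$, annihilate the affine part, and use additivity of the $\M$-norm over distinct atoms. The degeneracy you flag at the end is genuine and not merely cosmetic: for instance $\rho(\vec{w}^\T\vec{x}-b)-\rho(-\vec{w}^\T\vec{x}+b)=\vec{w}^\T\vec{x}-b$ is affine, so $\RTV^2$ vanishes while the right-hand side of \cref{eq:RTV-nn} equals $2\norm{\vec{w}}_2$; the stated equality therefore holds only under the hypothesis you identify---neurons distinct modulo the antipodal identification, with $\vec{w}_k\neq\vec{0}$---which is precisely the setting of the cited lemma.
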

We remark that \cref{eq:RTV-nn} is sometimes referred to as the \emph{path-norm} of the network~\cite{path-norm}. Moreover, we see that \cref{eq:RTV-nn} is a kind of $\ell^1$-norm on the network parameters, giving insight into the sparsity-promoting aspect of the $\RTV^2$-seminorm on network weights.

Note that $\RBV^2(\R^d)$ is defined by a seminorm, and the null space of $\RTV^2(\dummy)$ is nontrivial; it is the space of affine functions on $\R^d$. It was proven in~\cite[Theorem~22]{ridge-splines} that $\RBV^2(\R^d)$ can be turned into a \emph{bona fide} Banach space when equipped with an appropriate norm.

\begin{lemma} \label{thm:RBV-top-props}
    The space $\RBV^2(\R^d)$ equipped with the norm
    \begin{equation}
        \norm{f}_{\RBV^2(\R^d)} \coloneqq \RTV^2(f) + \abs{f(\vec{0})} + \sum_{k=1}^d \abs{f(\vec{e}_k) - f(\vec{0})},
        \label{eq:RBV-norm}
    \end{equation}
    where $\curly{\vec{e}_k}_{k=1}^d$ denotes the canonical basis of $\R^d$, has the following properties:
    \begin{enumerate}
        \item It is a Banach space. \label{item:RBV-Banach}
        \item For any $\vec{x}_0 \in \R^d$, the Dirac impulse $\delta(\dummy - \vec{x}_0): f \mapsto f(\vec{x}_0)$ is weak$^*$ continuous on $\RBV^2(\R^d)$. \label{item:weak*-continuous}
    \end{enumerate}
\end{lemma}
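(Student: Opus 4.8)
The plan is to present $\RBV^2(\R^d)$ as a \emph{native space} associated with the regularization operator $L \coloneqq c_d\,\partial_t^2 \ramp^{d-1} \RadonOp$, so that $\RTV^2(f) = \norm{Lf}_{\M(\cyl)}$ in the notation of \cref{eq:RTV}. As already noted in the excerpt, $L$ annihilates exactly the affine functions, so its null space $\mathcal{N}_L$ is the $(d+1)$-dimensional span of $p_0(\vec{x}) \coloneqq 1$ and $p_k(\vec{x}) \coloneqq x_k$, $k = 1, \ldots, d$. First I would pair this basis with the boundary functionals appearing in \cref{eq:RBV-norm}, namely $\phi_0(f) \coloneqq f(\vec{0})$ and $\phi_k(f) \coloneqq f(\vec{e}_k) - f(\vec{0})$, and verify the biorthogonality relations $\phi_i(p_j) = \delta_{ij}$ by direct computation (e.g.\ $\phi_k(p_j) = (\vec{e}_k)_j - 0 = \delta_{kj}$). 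This makes $\curly{\phi_i}_{i=0}^d$ a system dual to $\mathcal{N}_L$, which is the structural hypothesis needed to build a stable inverse of $L$.

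The next step is to construct a bounded right inverse $L^{-1}_\phi \colon \M(\cyl) \to \RBV^2(\R^d)$ satisfying $L L^{-1}_\phi = \mathrm{Id}$ on the range of $L$ and $\phi_i \circ L^{-1}_\phi = 0$ for every $i$. Using the filtered back-projection inversion formula for the Radon transform together with a double antiderivative in the offset variable $t$ (undoing $\partial_t^2$, whose one-dimensional Green's function is the truncated power $\rho$), one obtains a right inverse whose affine ambiguity is then pinned down by imposing the $d+1$ conditions $\phi_i(L^{-1}_\phi u) = 0$. With this in hand, every $f \in \RBV^2(\R^d)$ admits the decomposition
\[
  f = L^{-1}_\phi(Lf) + \sum_{i=0}^{d} \phi_i(f)\, p_i,
\]
and the map $f \mapsto \paren{Lf, \phi_0(f), \ldots, \phi_d(f)}$ becomes an isometry of $\RBV^2(\R^d)$ onto its image in $\M(\cyl) \times \R^{d+1}$ (with the sum of the $\M$-norm and the $\ell^1$-norm), since that sum is precisely \cref{eq:RBV-norm}. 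The Banach-space claim then follows in the standard way: a Cauchy sequence $(f_n)$ maps to a Cauchy sequence in the product space with limit $(u, \vec{a})$, and $f \coloneqq L^{-1}_\phi u + \sum_i a_i p_i$ is the $\RBV^2$-limit, using boundedness of $L^{-1}_\phi$, continuity of $L$, and continuity of each $\phi_i$ (immediate since $\abs{\phi_i(f)} \le \norm{f}_{\RBV^2(\R^d)}$).

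For the weak$^*$-continuity claim I would use that $\M(\cyl) = \paren{C_0(\cyl)}'$ is a dual space, so the isometry above endows $\RBV^2(\R^d)$ with the predual $C_0(\cyl) \times \R^{d+1}$ and an associated weak$^*$ topology. Evaluating the decomposition at $\vec{x}_0$ gives
\[
  f(\vec{x}_0) = \paren{L^{-1}_\phi(Lf)}(\vec{x}_0) + \sum_{i=0}^d \phi_i(f)\, p_i(\vec{x}_0),
\]
whose finite-dimensional part is weak$^*$ continuous because the $\phi_i$ are the coordinate functionals of the $\R^{d+1}$ factor. It then remains to show that $u \mapsto (L^{-1}_\phi u)(\vec{x}_0)$ is weak$^*$ continuous on $\M(\cyl)$, i.e.\ that there is a \emph{Green's function} $\eta_{\vec{x}_0} \in C_0(\cyl)$ with $(L^{-1}_\phi u)(\vec{x}_0) = \ang{u, \eta_{\vec{x}_0}}$, which one obtains as the adjoint of point evaluation applied to the kernel of $L^{-1}_\phi$.

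\textbf{Main obstacle.} The crux is verifying $\eta_{\vec{x}_0} \in C_0(\cyl)$. The ``raw'' kernel coming from the $\partial_t^2$-inverse is of the form $\rho(\vec{\gamma}^\T\vec{x}_0 - t)$, which grows linearly as $t \to -\infty$ and is therefore neither bounded nor decaying; the role of the biorthogonal corrections $-\sum_i p_i(\vec{x}_0)\,(\text{kernel of } \phi_i)$ is precisely to cancel this growth and leave a continuous function that vanishes as $\abs{t} \to \infty$. Making this cancellation rigorous — controlling the decay of the corrected kernel while simultaneously proving that $L^{-1}_\phi$ maps $\M(\cyl)$ boundedly into the linear-growth space $L^{\infty,1}(\R^d)$ and that the range of $L$ is a well-behaved (closed) subspace of $\M(\cyl)$ compatible with the Radon consistency conditions — is the technical heart of the argument and the step I expect to be hardest.
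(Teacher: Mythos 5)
Your proposal follows essentially the same route as the paper: the identical biorthogonal system $\phi_0 = \delta$, $\phi_k = \delta(\dummy - \vec{e}_k) - \delta$ paired with $1, x_1, \ldots, x_d$, the direct-sum decomposition through the corrected right inverse $\ROp^{-1}_{\vec{\phi}}$ (which the paper imports from~\cite[Theorem~22]{ridge-splines} rather than rederiving), and the reduction of weak$^*$ continuity of $\delta(\dummy - \vec{x}_0)$ to showing that the corrected kernel $g_{\vec{\phi}}(\vec{x}_0, \dummy)$ lies in the predual $C_0$. The single step you flag as the main obstacle is closed in the paper by a short explicit computation: choosing the Green's function $\rho = \abs{\dummy}/2$, one finds that $g_{\vec{\phi}}(\vec{x}_0, (\vec{w}, b))$ vanishes identically once $\abs{b}$ is large enough, so the corrected kernel is not merely decaying but compactly supported in the offset variable, and its evenness in $(\vec{w}, b)$ lets it descend to $C_0(\P^d)$, exactly cancelling the linear growth of $\rho(\vec{w}^\T \vec{x}_0 - b)$ as you anticipated.
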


The proof of \cref{thm:RBV-top-props} appears in \cref{app:RBV-top-props}. We remark that \cref{item:RBV-Banach} is a corollary of~\cite[Theorem~22]{ridge-splines} and \cref{item:weak*-continuous} is a new result. In particular, \cref{item:weak*-continuous} plays a crucial role in proving existence of solutions to the variational problem studied in our deep ReLU network representer theorem. The $\RBV^2(\R^d)$-norm is a sparsity promoting norm since $\RTV^2(\dummy)$ is defined via an $\M$-norm, the continuous-domain analogue of the $\ell^1$-norm.

\begin{remark} \label{rem:RTV-regularized}
    \Cref{thm:RBV-top-props} implies that the result of \cref{prop:ridge-spline-rep-thm} also holds for regularized problems of the form
    \[
        \min_{f \in \RBV^2(\R^d)} \: \sum_{n=1}^N \ell(y_n, f(\vec{x}_n)) + \lambda \, \RTV^2(f),
    \]
    where $\lambda > 0$ is an adjustable regularization parameter and the loss function $\ell(\dummy, \dummy)$ is convex, coercive, and lower semi-continuous. Note that these are slightly weaker conditions on the loss function than in~\cite[Theorem~1]{ridge-splines}. This version of the result holds due to the weak$^*$ continuity of the Dirac impulse $\delta(\dummy - \vec{x}_0): f \mapsto f(\vec{x}_0)$ on $\RBV^2(\R^d)$ combined with~\cite[Theorem~3]{unser2022convex} for the conditions on the loss function.
\end{remark}

While \cref{prop:ridge-spline-rep-thm} provides a powerful representer theorem result for single-hidden layer neural networks, the affine component of any solution is unregularized due to the null space of $\RTV^2(\dummy)$ being the space of affine functions on $\R^d$. Therefore, we modify the problem in \cref{eq:ridge-spline-variational} in order to explicitly regularize the affine component of the functions. This results in the following new representer theorem for single-hidden layer ReLU networks.

\begin{theorem} \label{thm:banach-rep-thm}
    Consider the problem of interpolating the scattered data $\curly{(\vec{x}_n, y_n)}_{n=1}^N \subset \R^d \times \R$ with $N > 0$. Then, under the hypothesis of feasibility (i.e., $y_n = y_m$ whenever $\vec{x}_n = \vec{x}_m$), there exists a solution to the variational problem
    \begin{equation}
        \min_{f \in \RBV^2(\R^d)} \norm{f}_{\RBV^2(\R^d)} \quad\subj\quad f(\vec{x}_n) = y_n, \: n = 1, \ldots, N
        \label{eq:banach-rep-variational}
    \end{equation}
    of the form
    \begin{equation}
        s(\vec{x}) = \sum_{k=1}^K v_k \, \rho(\vec{w}_k^\T\vec{x} - b_k) + \vec{c}^\T\vec{x} + c_0,
        \label{eq:banach-rep-soln}
    \end{equation}
    where $K \leq N$, $\rho = \max\curly{0, \dummy}$, $v_k \in \R$, $\vec{w}_k \in \Sph^{d-1}$, $b_k \in \R$, $\vec{c} \in \R^d$, and $c_0 \in \R$.
\end{theorem}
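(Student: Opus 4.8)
The plan is to prove the theorem in two stages: first establish that a minimizer of \cref{eq:banach-rep-variational} exists, and then, starting from such a minimizer, construct one of the prescribed parametric form \cref{eq:banach-rep-soln} by reducing to the seminorm representer theorem of \cref{prop:ridge-spline-rep-thm}. The governing idea for the second stage is that the affine penalty $\abs{f(\vec{0})} + \sum_{k=1}^d \abs{f(\vec{e}_k) - f(\vec{0})}$ in \cref{eq:RBV-norm} depends on $f$ only through its values at the $d+1$ \emph{anchor points} $\vec{0}, \vec{e}_1, \dots, \vec{e}_d$. Fixing those values turns the affine penalty into a constant and converts the full-norm problem into an augmented interpolation problem for the seminorm $\RTV^2$ alone, to which \cref{prop:ridge-spline-rep-thm} directly applies.

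For existence I would argue by the direct method in the weak$^*$ topology. The feasibility hypothesis guarantees the constraint set is nonempty, since any consistent finite data set can be interpolated by a function of the form \cref{eq:banach-rep-soln}, which lies in $\RBV^2(\R^d)$ with finite norm. Because $\RBV^2(\R^d)$ is a dual Banach space (by the construction underlying \cref{thm:RBV-top-props}), its norm is weak$^*$ lower semicontinuous and, by Banach--Alaoglu, its closed balls are weak$^*$ compact; by \cref{item:weak*-continuous}, each evaluation $f \mapsto f(\vec{x}_n)$ is weak$^*$ continuous, so the constraint set is weak$^*$ closed. Intersecting a norm sublevel set large enough to contain one feasible point with the constraint set yields a nonempty weak$^*$-compact set on which the weak$^*$ lsc norm attains its minimum; I call a minimizer $f_0$ and write $R = \RTV^2(f_0)$ and $A = \abs{f_0(\vec{0})} + \sum_{k=1}^d \abs{f_0(\vec{e}_k) - f_0(\vec{0})}$, so the optimal value is $M = R + A$.

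For the parametric form I would set the anchor targets $z_0 = f_0(\vec{0})$ and $z_k = f_0(\vec{e}_k)$ and consider the augmented problem $\min_f \RTV^2(f)$ subject to $f(\vec{x}_n) = y_n$ for all $n$ together with $f(\vec{0}) = z_0$ and $f(\vec{e}_k) = z_k$ for $k = 1, \dots, d$. A short contradiction argument shows $f_0$ is optimal for this augmented problem: any feasible $g$ with $\RTV^2(g) < R$ shares $f_0$'s anchor values, so its affine penalty still equals $A$, and then $g$ is feasible for \cref{eq:banach-rep-variational} with objective $\RTV^2(g) + A < M$, which is impossible. This augmented problem is an instance of \cref{prop:ridge-spline-rep-thm} on the set $\curly{\vec{x}_1, \dots, \vec{x}_N} \cup \curly{\vec{0}, \vec{e}_1, \dots, \vec{e}_d}$ of $N' \le N + (d+1)$ distinct points, whose prescribed values are consistent precisely because $f_0$ realizes all of them. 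Applying \cref{prop:ridge-spline-rep-thm} produces a minimizer $s$ of the form \cref{eq:ridge-spline} with at most $N' - (d+1) \le N$ ReLU units.

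Finally, I would verify that $s$ solves the original problem: it satisfies $s(\vec{x}_n) = y_n$ and, being optimal for the augmented seminorm problem alongside $f_0$, obeys $\RTV^2(s) = R$; since $s$ matches $f_0$ at the anchor points, its affine penalty is again $A$, whence $\norm{s}_{\RBV^2(\R^d)} = R + A = M$ and $s$ has the required form. I expect the third paragraph to be the crux: the nonobvious move is recognizing that exactly $d+1$ anchor constraints simultaneously neutralize the affine penalty and account for the shift in the knot bound from $N - (d+1)$ to $N$. The existence argument is comparatively routine once \cref{item:weak*-continuous} supplies weak$^*$ continuity of point evaluations, the only care point being the bookkeeping when anchor points coincide with data points. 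Since the $d+1$ anchors are always distinct, one has $N' \ge d+1$, with equality only when every data point lies among the anchors; in that degenerate case the $d+1$ affinely independent anchor values are interpolated by an affine function with $\RTV^2 = 0 = R$, giving $K = 0 \le N$ and leaving the bound intact.
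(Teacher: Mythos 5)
Your proposal is correct and follows essentially the same route as the paper: existence via weak$^*$ compactness and weak$^*$ continuity of point evaluations (the paper packages this step by invoking \cref{prop:existence}), followed by the observation that any minimizer also solves the augmented $\RTV^2$ interpolation problem with the $d+1$ anchor constraints at $\vec{0}, \vec{e}_1, \ldots, \vec{e}_d$, to which \cref{prop:ridge-spline-rep-thm} applies and yields $K \le (N + d + 1) - (d+1) = N$. Your explicit handling of the degenerate case where the augmented point count drops to $d+1$ (so the hypothesis $N > d+1$ of \cref{prop:ridge-spline-rep-thm} would fail) is a detail the paper's proof passes over silently, but it does not change the argument.
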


The proof of \cref{thm:banach-rep-thm} appears in \cref{app:banach-rep-thm}. The key difference between \cref{thm:banach-rep-thm} and \cref{prop:ridge-spline-rep-thm} is that in \cref{thm:banach-rep-thm}, we are minimizing the $\RBV^2(\R^d)$-norm rather than the $\RTV^2$-seminorm as in \cref{prop:ridge-spline-rep-thm}. This results in the sparsity of the number of neurons in the solution being $N$ rather than $N - (d + 1)$. Additionally, \cref{thm:banach-rep-thm} explicitly regularizes the skip connection that appears in \cref{eq:banach-rep-soln}.

\begin{lemma} \label{lemma:nn-norm}
    Given a single-hidden layer neural network
    \[
        s(\vec{x}) = \sum_{k=1}^K v_k \, \rho(\vec{w}_k^\T\vec{x} - b_k) + \vec{c}^\T\vec{x} + c_0,
    \]
    where $\rho = \max\curly{0, \dummy}$, $v_k \in \R$, $\vec{w}_k \in \R^d$, $b_k \in \R$, $\vec{c} \in \R^d$, and $c_0 \in \R$,
    \begin{equation}
        \norm{s}_{\RBV^2(\R^d)} = \sum_{k=1}^K \abs{v_k}\norm{\vec{w}_k}_2 + \abs{s(\vec{0})} + \sum_{n=1}^d \abs{s(\vec{e}_n) - s(\vec{0})}.
        \label{eq:RBV-nn-norm}
    \end{equation}
\end{lemma}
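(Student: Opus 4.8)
The plan is to obtain the formula by directly substituting $s$ into the definition of the $\RBV^2(\R^d)$-norm and then identifying each of the three summands that appear. Recall from \cref{eq:RBV-norm} that for any $f \in \RBV^2(\R^d)$,
\[
  \norm{f}_{\RBV^2(\R^d)} = \RTV^2(f) + \abs{f(\vec{0})} + \sum_{n=1}^d \abs{f(\vec{e}_n) - f(\vec{0})},
\]
so it suffices to evaluate these three terms for $f = s$.

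First I would confirm that $s$ is an admissible element of $\RBV^2(\R^d)$, so that its norm is well-defined. Each ridge function $\vec{x} \mapsto \rho(\vec{w}_k^\T\vec{x} - b_k)$ belongs to $\RBV^2(\R^d)$, and the affine part $\vec{x} \mapsto \vec{c}^\T\vec{x} + c_0$ lies in the null space of $\RTV^2(\dummy)$ while being of at most linear growth, hence in $L^{\infty,1}(\R^d)$. Since $\RBV^2(\R^d)$ is a linear (Banach) space by \cref{thm:RBV-top-props}, the finite sum $s$ lies in $\RBV^2(\R^d)$ and the point evaluations are legitimate, as $s$ is continuous.

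The only nontrivial term is $\RTV^2(s)$, and this is supplied immediately by \cref{prop:nn-seminorm}, which gives $\RTV^2(s) = \sum_{k=1}^K \abs{v_k}\norm{\vec{w}_k}_2$. The remaining two terms are point evaluations of the continuous function $s$ and require no computation: $\abs{s(\vec{0})}$ and $\sum_{n=1}^d \abs{s(\vec{e}_n) - s(\vec{0})}$ carry over verbatim into the claimed expression \cref{eq:RBV-nn-norm}. Combining the three contributions yields the result. There is essentially no obstacle here; the lemma is an immediate corollary of \cref{prop:nn-seminorm} together with the definition of the norm. The only conceptual point worth noting is that the affine (skip-connection) part of $s$ contributes nothing to $\RTV^2(s)$, since it lies in the null space of the seminorm, but it is precisely what the two point-evaluation terms account for.
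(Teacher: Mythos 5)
Your proof is correct and matches the paper's own argument, which likewise cites \cref{prop:nn-seminorm} for the $\RTV^2$ term and \cref{thm:RBV-top-props} (i.e., the norm definition in \cref{eq:RBV-norm}) for the point-evaluation terms. The extra verification that $s \in \RBV^2(\R^d)$ is a reasonable inclusion but not something the paper spells out.
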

\begin{proof}
    The result follows from \cref{prop:nn-seminorm,thm:RBV-top-props}.
\end{proof}


\subsection{Vector-valued single-hidden layer ReLU networks and variational problems}
Since a deep neural network is the composition of vector-valued single-hidden layer neural networks, we require a representer theorem for vector-valued single-hidden layer ReLU networks as a precursor to our representer theorem for deep ReLU networks. Extending \cref{thm:banach-rep-thm} for vector-valued functions follows standard techniques. In particular, we follow the technique of~\cite{vector-valued-smoothing-splines} which derives a representer theorem for vector-valued smoothing splines.

\begin{lemma} \label{lemma:vv-top-props}
    Define the vector-valued analogue of $\RBV^2(\R^d)$ by the Cartesian product
    \[
        \underbrace{\RBV^2(\R^d) \times \cdots \times \RBV^2(\R^d)}_\text{$D$ times}.
    \]
    This space can be viewed as the \emph{Bochner space} $\ell^1([D]; \RBV^2(\R^d))$, where $[D] = \curly{1, \ldots, D}$,
    and can therefore be equipped with the norm
    \[
         \norm{f}_{\ell^1([D]; \RBV^2(\R^d))} = \sum_{m=1}^D \norm{f_m}_{\RBV^2(\R^d)},
    \]
    where $f = (f_1, \ldots, f_D)$. For brevity, write $\RBV^2(\R^d; \R^D)$ for $\ell^1([D]; \RBV^2(\R^d))$. This space has the following properties:
    \begin{enumerate}
        \item It is a Banach space. \label{item:vv-banach}
        \item For any $\vec{x}_0 \in \R^d$, the point evaluation operator 
        \[
            \tilde{\vec{x}}_0: f \mapsto  f(\vec{x}_0) = \begin{bmatrix}
                \ang{\delta(\dummy - \vec{x}_0), f_1} \\
                \vdots \\
                \ang{\delta(\dummy - \vec{x}_0), f_D}
            \end{bmatrix}
            =
            \begin{bmatrix}
                f_1(\vec{x}_0) \\ \vdots \\ f_D(\vec{x}_0)
            \end{bmatrix}
        \]
         is component-wise weak* continuous. \label{item:vv-weak*-continuous}
    \end{enumerate}
\end{lemma}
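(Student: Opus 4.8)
The plan is to treat the two claims separately, reducing each to the scalar facts in \cref{thm:RBV-top-props} via the elementary theory of finite $\ell^1$-direct sums of Banach spaces. For \cref{item:vv-banach}, I would first verify that $\norm{\dummy}_{\ell^1([D]; \RBV^2(\R^d))}$ is a genuine norm: homogeneity, the triangle inequality, and positive-definiteness all hold coordinatewise because each $\norm{\dummy}_{\RBV^2(\R^d)}$ is a norm by \cref{thm:RBV-top-props}. Completeness then follows from the standard fact that a finite $\ell^1$-direct sum of Banach spaces is complete: a Cauchy sequence in the product is coordinatewise Cauchy, each coordinate converges in the corresponding $\RBV^2(\R^d)$ (using \cref{item:RBV-Banach}), and the resulting tuple of limits is the product limit. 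Equivalently, since $[D]$ is a finite set endowed with the counting measure, $\ell^1([D]; \RBV^2(\R^d))$ is literally the Bochner space $L^1$ of $\RBV^2(\R^d)$-valued functions, which is Banach whenever its target is.

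For \cref{item:vv-weak*-continuous}, the main work is to identify the predual of the direct-sum space and to exhibit each component point-evaluation as an element of it. By~\cite{ridge-splines}, $\RBV^2(\R^d)$ is itself a dual space; let $\mathcal{Z}$ denote a predual, so that $\RBV^2(\R^d) \cong \mathcal{Z}^*$. The content of \cref{item:weak*-continuous} of \cref{thm:RBV-top-props} is precisely that the Dirac functional $\delta(\dummy - \vec{x}_0)$, being weak$^*$ continuous, is represented by an element of the predual $\mathcal{Z}$. I would then invoke the elementary duality for finite direct sums, $\left(\bigoplus_{m=1}^D \mathcal{Z}\right)_\infty^* \cong \bigoplus_{m=1}^D \mathcal{Z}^*$ with the $\ell^1$ norm (the sup-normed direct sum on one side being dual to the $\ell^1$-normed direct sum of duals; for a finite index set there are no Radon--Nikodym subtleties). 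This identifies $\mathcal{Z}^D$, equipped with the sup norm, as a predual of $\RBV^2(\R^d; \R^D)$ and fixes the weak$^*$ topology as that of coordinatewise weak$^*$ convergence.

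With the predual in hand, the conclusion is immediate. The $m$-th component of $\tilde{\vec{x}}_0$ is the functional $f = (f_1, \ldots, f_D) \mapsto f_m(\vec{x}_0) = \ang{\delta(\dummy - \vec{x}_0), f_m}$. This equals the duality pairing of $f$ against the element of $\mathcal{Z}^D$ that places the predual representative of $\delta(\dummy - \vec{x}_0)$ in slot $m$ and $0$ elsewhere. Since a linear functional on a dual Banach space is weak$^*$ continuous exactly when it lies in the predual, each component of $\tilde{\vec{x}}_0$ is weak$^*$ continuous, which is the claim.

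The only genuine obstacle is bookkeeping the predual identification correctly---in particular, confirming that $\delta(\dummy - \vec{x}_0) \in \mathcal{Z}$ (supplied by \cref{thm:RBV-top-props}) and that the finite-direct-sum duality realizes exactly the coordinatewise weak$^*$ topology on $\RBV^2(\R^d; \R^D)$---after which everything reduces to the definition of weak$^*$ continuity. I expect no analytic difficulty beyond this, since finiteness of $D$ trivializes every measure-theoretic subtlety that would otherwise arise for general Bochner $\ell^1$-spaces.
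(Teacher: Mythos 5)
Your proposal is correct and follows essentially the same route as the paper, which simply reduces both items to the scalar facts in \cref{thm:RBV-top-props}: completeness of a finite $\ell^1$-direct sum of Banach spaces for \cref{item:vv-banach}, and the scalar weak$^*$ continuity of $\delta(\dummy-\vec{x}_0)$ for \cref{item:vv-weak*-continuous}. The only difference is that you spell out the predual identification $\bigl(\bigoplus_{m=1}^D \mathcal{Z}\bigr)_\infty^* \cong \bigoplus_{m=1}^D \mathcal{Z}^*$ and the resulting coordinatewise weak$^*$ topology explicitly, details the paper leaves implicit in its two-line proof.
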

\begin{proof}
    \Cref{item:vv-banach} follows by construction since $\RBV^2(\R^d)$ is itself a Banach space from \cref{item:RBV-Banach} in \cref{thm:RBV-top-props}. \Cref{item:weak*-continuous} follows from \cref{item:weak*-continuous} in \cref{thm:RBV-top-props}.
\end{proof}
\begin{remark}
    We can define different (but equivalent) norms on $\RBV^2(\R^d; \R^D)$ via the $\ell^p([D]; \RBV^2(\R^d))$-norms, where $1 \leq p < \infty$. We focus on the case of $p = 1$ in this paper for clarity.
\end{remark}

\begin{lemma} \label{lemma:Lipschitz-bound}
    Let $f \in \RBV^2(\R^d; \R^D)$. Then, $f$ is Lipschitz continuous and satisfies the Lipschitz bound
    \[
        \norm{f(\vec{x}) - f(\vec{y})}_1 \leq \norm{f}_{\RBV^2(\R^d; \R^D)} \, \norm{\vec{x} - \vec{y}}_1.
    \]
\end{lemma}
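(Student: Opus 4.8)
The plan is to reduce to the scalar-valued case and then bound the essential supremum of the gradient. Since $\RBV^2(\R^d;\R^D) = \ell^1([D];\RBV^2(\R^d))$, writing $f = (f_1,\dots,f_D)$ gives $\norm{f(\vec{x})-f(\vec{y})}_1 = \sum_{m=1}^D \abs{f_m(\vec{x}) - f_m(\vec{y})}$ and $\norm{f}_{\RBV^2(\R^d;\R^D)} = \sum_{m=1}^D \norm{f_m}_{\RBV^2(\R^d)}$. Hence it suffices to establish the scalar bound $\abs{g(\vec{x})-g(\vec{y})} \le \norm{g}_{\RBV^2(\R^d)}\,\norm{\vec{x}-\vec{y}}_1$ for every $g\in\RBV^2(\R^d)$ and then sum over the $D$ output coordinates.

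For the scalar bound I would invoke the ridge/Radon integral representation of an element of $\RBV^2(\R^d)$ from~\cite{ridge-splines}, namely $g(\vec{x}) = \vec{c}^\T\vec{x} + c_0 + \int_{\cyl}\rho(\vec{\gamma}^\T\vec{x} - t)\,\dd\mu(\vec{\gamma},t)$ with $\vec{\gamma}\in\Sph^{d-1}$ and a representing measure satisfying $\norm{\mu}_{\M(\cyl)} = \RTV^2(g)$. Because each atom $\vec{x}\mapsto\rho(\vec{\gamma}^\T\vec{x} - t)$ is $1$-Lipschitz in $\vec{\gamma}^\T\vec{x}$ and $\norm{\vec{\gamma}}_\infty\le\norm{\vec{\gamma}}_2 = 1$, the function $g$ is Lipschitz; in particular it is differentiable almost everywhere, with $\partial_{x_j}g(\vec{z}) = c_j + \int_{\cyl}\mathbf{1}\{\vec{\gamma}^\T\vec{z} > t\}\,\gamma_j\,\dd\mu(\vec{\gamma},t)$ for a.e.\ $\vec{z}$.

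The crux is to control $\esssup_{\vec{z}}\norm{\nabla g(\vec{z})}_\infty$, which is exactly the $\ell^1\to\R$ Lipschitz constant of $g$. For each fixed $j$, two observations combine. First, as $\vec{z}$ varies the oscillation of $\partial_{x_j}g$ is at most $\int_{\cyl}\abs{\gamma_j}\,\dd\abs{\mu} \le \norm{\mu}_{\M(\cyl)} = \RTV^2(g)$, again using $\abs{\gamma_j}\le\norm{\vec{\gamma}}_2 = 1$. Second, the increment along the segment from $\vec{0}$ to $\vec{e}_j$ anchors this partial derivative, since $g(\vec{e}_j)-g(\vec{0}) = \int_0^1 \partial_{x_j}g(s\vec{e}_j)\,\dd s$ lies between the essential infimum and supremum of $\partial_{x_j}g$. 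Combining the two yields $\abs{\partial_{x_j}g(\vec{z})} \le \RTV^2(g) + \abs{g(\vec{e}_j)-g(\vec{0})}$ for a.e.\ $\vec{z}$, whence $\esssup_{\vec{z}}\norm{\nabla g(\vec{z})}_\infty \le \RTV^2(g) + \sum_{j=1}^d\abs{g(\vec{e}_j)-g(\vec{0})} \le \norm{g}_{\RBV^2(\R^d)}$. The scalar bound then follows from the fundamental theorem of calculus along the segment joining $\vec{y}$ and $\vec{x}$, valid because $g$ is Lipschitz.

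The main obstacle is the affine/gradient term, not the ridge term. The naive estimate that splits $g$ into its ridge part (Lipschitz constant $\RTV^2(g)$) and its affine part (Lipschitz constant $\norm{\vec{c}}_\infty$) double counts: bounding $\norm{\vec{c}}_\infty$ through the boundary increments $g(\vec{e}_j)-g(\vec{0})$ itself costs an extra $\RTV^2(g)$ and produces a spurious factor of two. The oscillation-plus-anchoring argument above avoids this by tying the \emph{entire} partial derivative, rather than the isolated coefficient $c_j$, directly to the increments $g(\vec{e}_j)-g(\vec{0})$ appearing in the $\RBV^2$-norm. As an alternative one could first prove the bound for finite-width ReLU networks using \cref{prop:nn-seminorm} and then pass to the limit via the weak$^*$ density of such networks together with the weak$^*$ continuity of point evaluation from \cref{thm:RBV-top-props}; this route, however, additionally requires a norm-controlled approximation and is less direct.
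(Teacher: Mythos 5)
Your argument is correct in substance but follows a genuinely different route from the paper's. The paper proves the scalar bound by taking the direct-sum decomposition $f = \ROp^{-1}_{\vec{\phi}}\{u\} + q$ of \cref{prop:direct-sum-inverse} with the \emph{corrected} kernel $g_{\vec{\phi}}$, checking by hand that $\vec{x}\mapsto g_{\vec{\phi}}(\vec{x},\vec{z})$ is $1$-Lipschitz with respect to $\norm{\dummy}_1$ uniformly in $\vec{z}$, and integrating against $u$. Because the corrected kernel annihilates the boundary functionals, the affine coefficients in that decomposition are exactly $c_0=f(\vec{0})$ and $c_j=f(\vec{e}_j)-f(\vec{0})$, so the resulting estimate $\norm{u}_{\M(\cyl)}+\norm{\vec{c}}_\infty\le\norm{f}_{\RBV^2(\R^d)}$ involves no double counting; the ``spurious factor of two'' you describe is a feature of an arbitrary (uncorrected) decomposition, not of the paper's argument. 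Your oscillation-plus-anchoring bound on $\esssup_{\vec{z}}\norm{\nabla g(\vec{z})}_\infty$ is a legitimate alternative that reaches the same constant and has the merit of being insensitive to which decomposition one starts from; the paper's route buys a purely pointwise estimate on the kernel with no need to discuss a.e.\ differentiability, Fubini over parallel segments, or the fundamental theorem of calculus for Lipschitz functions, each of which your route quietly relies on and should justify in one line.

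One step as written would fail and needs repair: the representation $g(\vec{x})=\vec{c}^\T\vec{x}+c_0+\int_{\cyl}\rho(\vec{\gamma}^\T\vec{x}-t)\,\dd\mu(\vec{\gamma},t)$ with the raw ReLU atom is not valid for general $g\in\RBV^2(\R^d)$, since $\rho(\vec{\gamma}^\T\vec{x}-t)$ grows linearly in $t$ and the integral need not converge against a finite measure; this is precisely why \cref{prop:direct-sum-inverse} uses the corrected, compactly supported kernel $g_{\vec{\phi}}$. Your argument survives the repair, because differentiating the corrected kernel only shifts $\partial_{x_j}g$ by a term independent of the evaluation point, which changes neither the oscillation bound nor the anchoring identity $g(\vec{e}_j)-g(\vec{0})\in[\essinf\partial_{x_j}g,\esssup\partial_{x_j}g]$, but the representation must be invoked in its correct form.
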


The proof of \cref{lemma:Lipschitz-bound} appears in \cref{app:Lipschitz-bound}.

\begin{theorem} \label{thm:vv-banach-rep-thm}
    Consider the problem of interpolating the scattered data $\curly{(\vec{x}_n, \vec{y}_n)}_{n=1}^N \subset \R^d \times \R^D$ with $N > 0$. Then, under the hypothesis of feasibility (i.e., $\vec{y}_n = \vec{y}_m$ whenever $\vec{x}_n = \vec{x}_m$), there exists a solution to the variational problem
    \begin{equation}
        \min_{f \in \RBV^2(\R^d; \R^D)} \norm{f}_{\RBV^2(\R^d; \R^D)} \quad\subj\quad f(\vec{x}_n) = \vec{y}_n, \: n = 1, \ldots, N
        \label{eq:vv-banach-rep-variational}
    \end{equation}
    of the form
    \begin{equation}
        s(\vec{x}) = \sum_{k=1}^K \vec{v}_k \, \rho(\vec{w}_k^\T\vec{x} - b_k) + \mat{C}\vec{x} + \vec{c}_0,
        \label{eq:vv-banach-rep-soln}
    \end{equation}
    where $K \leq ND$, $\rho = \max\curly{0, \dummy}$, $\vec{v}_k \in \R^D$, $\vec{w}_k \in \Sph^{d-1}$, $b_k \in \R$, $\mat{C} \in \R^{D \times d}$, and $\vec{c}_0 \in \R^D$. Moreover, there always exists a solution of the form in \cref{eq:vv-banach-rep-soln} in which $\vec{v}_k$ is $1$-sparse.
\end{theorem}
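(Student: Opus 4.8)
The plan is to reduce the vector-valued problem to $D$ decoupled scalar problems and then invoke \cref{thm:banach-rep-thm} coordinate-by-coordinate. The essential structural fact, supplied by \cref{lemma:vv-top-props}, is that the $\RBV^2(\R^d; \R^D)$-norm is the $\ell^1$-sum $\sum_{m=1}^D \norm{f_m}_{\RBV^2(\R^d)}$ over the output coordinates of $f = (f_1, \ldots, f_D)$. Writing $\vec{y}_n = (y_{n,1}, \ldots, y_{n,D})$, the interpolation constraint $f(\vec{x}_n) = \vec{y}_n$ is equivalent to the $ND$ scalar constraints $f_m(\vec{x}_n) = y_{n,m}$, none of which couples distinct coordinates. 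Hence both the objective and the feasible set separate across $m$, and the infimum in \cref{eq:vv-banach-rep-variational} equals the sum over $m$ of the infima of the scalar problems $\min_{f_m \in \RBV^2(\R^d)} \norm{f_m}_{\RBV^2(\R^d)}$ subject to $f_m(\vec{x}_n) = y_{n,m}$. I would also note that the vector feasibility hypothesis ($\vec{y}_n = \vec{y}_m$ whenever $\vec{x}_n = \vec{x}_m$) passes to each coordinate, so every scalar problem is feasible.

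Next I would apply \cref{thm:banach-rep-thm} to each of the $D$ scalar problems. This yields, for each $m$, a minimizer $s_m$ of the form $s_m(\vec{x}) = \sum_{k=1}^{K_m} v_{k,m}\, \rho(\vec{w}_{k,m}^\T \vec{x} - b_{k,m}) + \vec{c}_m^\T \vec{x} + c_{0,m}$ with at most $K_m \le N$ neurons and inner weights $\vec{w}_{k,m} \in \Sph^{d-1}$. Since each scalar infimum is attained, stacking $s = (s_1, \ldots, s_D)$ produces a feasible vector-valued function whose norm equals the sum of the scalar minima, i.e.\ the value of the decoupled problem; by the separation established above this is exactly the infimum of \cref{eq:vv-banach-rep-variational}, so $s$ is a genuine minimizer.

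Finally I would assemble $s$ into the claimed form \cref{eq:vv-banach-rep-soln}. Relabeling the $\sum_{m=1}^D K_m \le ND$ neurons by a single index $k$, each neuron $\rho(\vec{w}_k^\T \vec{x} - b_k)$ originates from exactly one output coordinate $m$, so its outer coefficient is $\vec{v}_k = v_{k,m}\, \vec{e}_m$, which is $1$-sparse; this simultaneously gives the general bound $K \le ND$ and the ``moreover'' claim. The affine pieces collect into $\mat{C}\vec{x} + \vec{c}_0$ by taking the $m$-th row of $\mat{C} \in \R^{D \times d}$ to be $\vec{c}_m^\T$ and the $m$-th entry of $\vec{c}_0 \in \R^D$ to be $c_{0,m}$.

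The construction makes the representation essentially automatic once the $\ell^1$-decoupling is justified, so I do not expect a serious obstacle in the form of the solution; the only point demanding care is the separation of the infimum, which relies on there being no cross-coordinate constraints and on each scalar infimum actually being attained (the latter being exactly the content of \cref{thm:banach-rep-thm}). If one instead wanted a self-contained existence proof for the vector-valued minimizer without appealing to the scalar theorem, the main work would be a direct-method argument: extract a norm-bounded minimizing sequence, pass to a weak$^*$ convergent subsequence via Banach--Alaoglu in the dual space $\RBV^2(\R^d; \R^D)$, and use the component-wise weak$^*$ continuity of point evaluation from \cref{lemma:vv-top-props} to show the limit remains feasible while the norm is weak$^*$ lower semicontinuous.
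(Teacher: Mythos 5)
Your proposal is correct and follows essentially the same route as the paper: decouple the problem into $D$ scalar problems via the $\ell^1$-sum structure of the $\RBV^2(\R^d;\R^D)$-norm, apply \cref{thm:banach-rep-thm} coordinate-by-coordinate, and stack the resulting single-hidden layer networks to obtain a $1$-sparse representation with $K \le \sum_m K_m \le ND$ neurons. The only (harmless) difference is that the paper first establishes existence of a vector-valued minimizer abstractly via \cref{prop:existence} and then argues each component must solve the scalar problem, whereas you construct the minimizer directly from the attained scalar minima, which is valid here precisely because neither the objective nor the constraints couple the output coordinates.
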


The proof of \cref{thm:vv-banach-rep-thm} appears in \cref{app:vv-banach-rep-thm}. We also remark that the tightness of the bound $K \leq ND$ is an open question.

\begin{remark} \label{rem:vv-rescale}
    As discussed in \cref{rem:rescale}, the fact that $\vec{w}_k \in \Sph^{d-1}$ in \cref{eq:vv-banach-rep-soln} does not restrict the single-hidden layer neural network due to the positive homogeneity of the ReLU.
\end{remark}
\begin{lemma} \label{lemma:vv-nn-norm}
    Given a vector-valued single-hidden layer neural network
    \[
        s(\vec{x}) = \sum_{k=1}^K \vec{v}_k \, \rho(\vec{w}_k^\T\vec{x} - b_k) + \mat{C}\vec{x} + \vec{c}_0,
    \]
    where $\rho = \max\curly{0, \dummy}$, $\vec{v}_k \in \R^D$, $\vec{w}_k \in \R^d$, $b_k \in \R$, $\mat{C} \in \R^{D \times d}$, and $\vec{c}_0 \in \R^D$,
    \begin{equation}
        \norm{s}_{\RBV^2(\R^d; \R^D)} = \sum_{k=1}^K \norm{\vec{v}_k}_1 \norm{\vec{w}_k}_2 + 
        \sum_{m=1}^D \paren{\abs{s_m(\vec{0})} + \sum_{n=1}^d \abs{s_m(\vec{e}_n) - s_m(\vec{0})}}.
        \label{eq:vv-RBV-nn-norm}
    \end{equation}
\end{lemma}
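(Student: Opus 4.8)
The plan is to reduce the vector-valued computation to the scalar norm formula of \cref{lemma:nn-norm} by exploiting the Bochner-space structure of $\RBV^2(\R^d; \R^D)$. Recall from \cref{lemma:vv-top-props} that the norm decomposes across coordinates as $\norm{s}_{\RBV^2(\R^d; \R^D)} = \sum_{m=1}^D \norm{s_m}_{\RBV^2(\R^d)}$, so it suffices to identify each coordinate function $s_m$ as a scalar single-hidden layer network and apply the scalar formula termwise.

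First I would write out the $m$-th coordinate of $s$. Because the inner weights $\vec{w}_k$ and biases $b_k$ are shared across all output coordinates, the $m$-th coordinate is
\[
    s_m(\vec{x}) = \sum_{k=1}^K (\vec{v}_k)_m \, \rho(\vec{w}_k^\T\vec{x} - b_k) + \vec{c}_m^\T\vec{x} + (\vec{c}_0)_m,
\]
where $(\vec{v}_k)_m$ denotes the $m$-th entry of $\vec{v}_k$, $\vec{c}_m^\T$ the $m$-th row of $\mat{C}$, and $(\vec{c}_0)_m$ the $m$-th entry of $\vec{c}_0$. This is exactly a scalar-valued single-hidden layer ReLU network with a skip connection, so \cref{lemma:nn-norm} applies verbatim and expresses $\norm{s_m}_{\RBV^2(\R^d)}$ as $\sum_{k=1}^K \abs{(\vec{v}_k)_m} \norm{\vec{w}_k}_2$ plus the affine boundary terms $\abs{s_m(\vec{0})} + \sum_{n=1}^d \abs{s_m(\vec{e}_n) - s_m(\vec{0})}$.

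Next I would sum this identity over $m = 1, \ldots, D$ and interchange the order of the finite double sum appearing in the first term, rewriting it as $\sum_{k=1}^K \norm{\vec{w}_k}_2 \sum_{m=1}^D \abs{(\vec{v}_k)_m}$ and recognizing $\sum_{m=1}^D \abs{(\vec{v}_k)_m} = \norm{\vec{v}_k}_1$. This produces the first term of \cref{eq:vv-RBV-nn-norm}, while the collected boundary terms assemble into the coordinate sum in the second term. Since all sums here are finite, the interchange is unconditionally valid and there is no genuine analytic obstacle; the only step requiring care is the bookkeeping of how $\vec{v}_k$, $\mat{C}$, and $\vec{c}_0$ index into each coordinate, so as to confirm that every $s_m$ really satisfies the hypotheses of \cref{lemma:nn-norm}.
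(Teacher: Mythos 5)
Your proposal is correct and follows the same route as the paper's own proof: decompose $s$ coordinate-wise into scalar single-hidden layer networks, apply \cref{lemma:nn-norm} to each component, and sum using the definition of the $\RBV^2(\R^d;\R^D)$-norm from \cref{lemma:vv-top-props}. The only difference is that you spell out the interchange of the finite double sum to recover $\norm{\vec{v}_k}_1$, which the paper leaves implicit.
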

\begin{proof}
    For $m = 1, \ldots, D$, we can write
    \[
        s_m(\vec{x}) = \sum_{k=1}^K v_{k,m} \,\rho(\vec{w}_k^\T\vec{x} - b_k) + \vec{c}_m^\T\vec{x} + c_{0, m},
    \]
    where $s_m$ is the $m$th component of $s$, $\vec{c}_m$ is the $m$th row of $\mat{C}$, and $c_{0,m}$ is the $m$th component of $\vec{c}_0$. The result follows from \cref{lemma:nn-norm} and the definition of the $\RBV^2(\R^d; \R^D)$-norm.
\end{proof}
    
\section{A Representer Theorem for Deep ReLU Networks} \label{sec:rep-thm}
In this section, we will prove our representer theorem for deep ReLU networks. We consider functions that are compositions of functions from the Banach spaces defined in \cref{lemma:vv-top-props}. Let
\begin{align*}
    &\RBV_{\mathsf{deep}}^2(\R^{d_0}; \cdots; \R^{d_L}) \\
    &\qquad\qquad \coloneqq \curly{f = f^{(L)} \circ \cdots \circ f^{(1)} \st f^{(\ell)} \in \RBV^2(\R^{d_{\ell - 1}}; \R^{d_\ell}), \ell = 1, \ldots, L}.
\end{align*}
denote the space of all such functions.

For brevity, we will write $\RBV_{\mathsf{deep}}^2(L)$ for $\RBV_{\mathsf{deep}}^2(\R^{d_0}; \cdots; \R^{d_L})$.
This definition reflects two standard architectural specifications for deep neural networks: the number of hidden layers $L$ and the functional ``widths'', $d_\ell$, of each layer. That is, each function in the composition will ultimately correspond to a layer in a deep neural network in our representer theorem.

\begin{lemma} \label{lemma:deep-Lipschitz-bound}
    Let $f = f^{(L)} \circ \cdots \circ f^{(1)} \in \RBV_{\mathsf{deep}}^2(L)$. Then, $f$ is Lipschitz continuous and satisfies the Lipschitz bound
    \[
        \norm{f(\vec{x}) - f(\vec{y})}_1 \leq \paren{\prod_{\ell=1}^L \norm*{f^{(\ell)}}_{\RBV^2(\R^{d_{\ell-1}}; \R^{d_\ell})}} \, \norm{\vec{x} - \vec{y}}_1.
    \]
\end{lemma}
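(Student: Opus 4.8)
The plan is to proceed by induction on the number of layers $L$, using the single-layer Lipschitz bound of \cref{lemma:Lipschitz-bound} as the base case. The crucial observation is that \cref{lemma:Lipschitz-bound} measures both the input and output in the $\ell^1$-norm, i.e., each factor $f^{(\ell)} \in \RBV^2(\R^{d_{\ell-1}}; \R^{d_\ell})$ satisfies
\[
    \norm{f^{(\ell)}(\vec{u}) - f^{(\ell)}(\vec{w})}_1 \leq \norm*{f^{(\ell)}}_{\RBV^2(\R^{d_{\ell-1}}; \R^{d_\ell})} \norm{\vec{u} - \vec{w}}_1
\]
for all $\vec{u}, \vec{w} \in \R^{d_{\ell-1}}$. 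Since the codomain $\ell^1$-norm of layer $\ell-1$ is precisely the domain $\ell^1$-norm feeding into layer $\ell$, the individual Lipschitz constants chain together multiplicatively without any loss or norm-mismatch.

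Concretely, I would introduce the partial compositions $g_\ell \coloneqq f^{(\ell)} \circ \cdots \circ f^{(1)}$, with $g_0$ the identity, so that $g_L = f$. The induction hypothesis is that
\[
    \norm{g_{\ell-1}(\vec{x}) - g_{\ell-1}(\vec{y})}_1 \leq \paren{\prod_{j=1}^{\ell-1} \norm*{f^{(j)}}_{\RBV^2(\R^{d_{j-1}}; \R^{d_j})}} \norm{\vec{x} - \vec{y}}_1.
\]
Applying \cref{lemma:Lipschitz-bound} to $f^{(\ell)}$ at the points $g_{\ell-1}(\vec{x})$ and $g_{\ell-1}(\vec{y})$, and then substituting the induction hypothesis, yields the same bound for $g_\ell$ with one extra factor, completing the induction step. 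Taking $\ell = L$ gives the claimed product bound, and the finiteness of each factor (guaranteed since each $f^{(\ell)}$ lies in a Banach space with finite norm) ensures $f$ is globally Lipschitz.

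There is essentially no substantive obstacle here: the statement is a clean instance of the general fact that a composition of Lipschitz maps is Lipschitz with constant equal to the product of the individual constants. The only point requiring minor care is keeping the norm consistent at each stage, and this is exactly what the uniform use of the $\ell^1$-norm across all layers—supplied by \cref{lemma:Lipschitz-bound} together with the definition of the $\RBV^2(\R^d; \R^D)$-norm in \cref{lemma:vv-top-props}—provides. Had the bounds instead been stated with respect to different norms on the intermediate spaces, additional norm-equivalence constants would enter the product; the present formulation sidesteps this entirely, so the telescoping is exact.
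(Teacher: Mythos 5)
Your proof is correct and is essentially the paper's own argument: the paper simply states that the result ``follows by repeatedly applying \cref{lemma:Lipschitz-bound},'' and your induction on the partial compositions $g_\ell$ is exactly that repeated application written out in full, with the correct observation that the uniform use of the $\ell^1$-norm makes the Lipschitz constants chain multiplicatively.
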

\begin{proof}
    The result follows by repeatedly applying \cref{lemma:Lipschitz-bound}.
\end{proof}

We now state our representer theorem for deep ReLU networks.

\begin{theorem} \label{thm:deep-representer}
    Let $L$ be a positive integer corresponding to the depth of a deep ReLU network and let $d_0, \ldots, d_L$ be positive integers corresponding to the intermediate dimensions of a deep neural network. Consider the problem of approximating the scattered data $\curly{(\vec{x}_n, \vec{y}_n)}_{n=1}^N \subset \R^{d_0} \times \R^{d_L}$ with $N > 0$ denoting the number of data. Let $\ell(\dummy, \dummy)$ be an arbitrary nonnegative lower semi-continuous loss function and let $\lambda > 0$ be a regularization parameter. Then, there exists a solution to the variational problem
    \begin{equation}
        \min_{\substack{f^{(\ell)} \in \RBV^2(\R^{d_{\ell-1}}; \R^{d_\ell}) \\ \ell = 1, \ldots, L \\ f = f^{(L)} \circ \cdots \circ f^{(1)}}} \: \sum_{n=1}^N \ell(\vec{y}_n, f(\vec{x}_n)) + \lambda \sum_{\ell = 1}^L \norm*{f^{(\ell)}}_{\RBV^2(\R^{d_{\ell - 1}}; \R^{d_\ell})}
        \label{eq:deep-variational}
    \end{equation}
    of the form
    \begin{equation}
        s(\vec{x}) = \vec{x}^{(L)},
        \label{eq:deep-solution}
    \end{equation}
    where $\vec{x}^{(L)}$ is computed recursively via
    \begin{equation}
        \begin{cases}
            \vec{x}^{(0)} \coloneqq \vec{x}, \\
            \vec{x}^{(\ell)} \coloneqq \mat{V}^{(\ell)} \vec{\rho}(\mat{W}^{(\ell)} \vec{x}^{(\ell - 1)} - \vec{b}^{(\ell)}) + \mat{C}^{(\ell)}\vec{x}^{(\ell - 1)} + \vec{c}_0^{(\ell)}, & \ell = 1, \ldots, L,
        \end{cases}
        \label{eq:deep-architecture}
    \end{equation}
    where $\vec{\rho}$ applies $\rho = \max\curly{0, \dummy}$ component-wise and for $\ell = 1, \ldots, L$, $\mat{V}^{(\ell)} \in \R^{d_\ell \times K^{(\ell)}}$, $\mat{W}^{(\ell)} \in \R^{K^{(\ell)} \times d_{\ell - 1}}$, $\vec{b}^{(\ell)} \in \R^{K^{(\ell)}}$, $\mat{C}^{(\ell)} \in \R^{d_{\ell} \times d_{\ell - 1}}$, and $\vec{c}_0^{(\ell)} \in \R^{d_{\ell}}$, where $K^{(\ell)} \leq N d_\ell$.
\end{theorem}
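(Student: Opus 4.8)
The plan is to split the argument into two parts: first, establish that the variational problem in \cref{eq:deep-variational} admits a minimizer; second, show that any such minimizer can be replaced, layer by layer, by one of the deep ReLU network form in \cref{eq:deep-architecture}. Denote by $J(f^{(1)}, \ldots, f^{(L)})$ the objective in \cref{eq:deep-variational} and by $J^\ast$ its infimum.

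For existence I would use the direct method. Since $\ell(\dummy, \dummy) \geq 0$ and the tuple of constant-zero functions is feasible with finite objective, the infimum $J^\ast$ is finite. Taking a minimizing sequence $\curly{(f_j^{(1)}, \ldots, f_j^{(L)})}_j$, the regularization term forces $\sup_j \norm{f_j^{(\ell)}}_{\RBV^2(\R^{d_{\ell-1}}; \R^{d_\ell})} < \infty$ for each $\ell$. Each $\RBV^2(\R^{d_{\ell-1}}; \R^{d_\ell})$ is a dual space, so by the Banach--Alaoglu theorem and a diagonal extraction over the $L$ layers I can pass to a subsequence with $f_j^{(\ell)} \rightharpoonup f_\infty^{(\ell)}$ in the weak$^\ast$ sense for every $\ell$; weak$^\ast$ lower semicontinuity of the dual norm then gives $\norm{f_\infty^{(\ell)}}_{\RBV^2} \leq \liminf_j \norm{f_j^{(\ell)}}_{\RBV^2}$. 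The crux is showing $f_j(\vec{x}_n) \to f_\infty(\vec{x}_n)$ for each data point, which I would prove by induction on the layer index. The obstacle is that weak$^\ast$ convergence of the layers only yields convergence of point evaluations at \emph{fixed} arguments (via \cref{item:vv-weak*-continuous} of \cref{lemma:vv-top-props}), whereas in the composition the argument fed into layer $\ell$ is itself the moving quantity $\vec{z}_{n,j}^{(\ell-1)} \coloneqq (f_j^{(\ell-1)} \circ \cdots \circ f_j^{(1)})(\vec{x}_n)$. I would resolve this by the splitting
\[
    \norm{f_j^{(\ell)}(\vec{z}_{n,j}^{(\ell-1)}) - f_\infty^{(\ell)}(\vec{z}_n^{(\ell-1)})}_1 \leq \norm{f_j^{(\ell)}(\vec{z}_{n,j}^{(\ell-1)}) - f_j^{(\ell)}(\vec{z}_n^{(\ell-1)})}_1 + \norm{f_j^{(\ell)}(\vec{z}_n^{(\ell-1)}) - f_\infty^{(\ell)}(\vec{z}_n^{(\ell-1)})}_1,
\]
bounding the first term by $\norm{f_j^{(\ell)}}_{\RBV^2}\,\norm{\vec{z}_{n,j}^{(\ell-1)} - \vec{z}_n^{(\ell-1)}}_1$ via the uniform Lipschitz estimate of \cref{lemma:Lipschitz-bound} (the norms being uniformly bounded and the inductive hypothesis $\vec{z}_{n,j}^{(\ell-1)} \to \vec{z}_n^{(\ell-1)}$ killing it), and sending the second term to zero by weak$^\ast$ continuity of point evaluation at the \emph{fixed} point $\vec{z}_n^{(\ell-1)}$. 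With $f_j(\vec{x}_n) \to f_\infty(\vec{x}_n)$ in hand, lower semicontinuity of the loss in its second argument together with the norm lower semicontinuity yields $J(f_\infty^{(1)}, \ldots, f_\infty^{(L)}) \leq \liminf_j J(f_j^{(1)}, \ldots, f_j^{(L)}) = J^\ast$, so $f_\infty$ is a minimizer.

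For the second part I would fix a minimizer $(f^{(1)}, \ldots, f^{(L)})$ and set $\vec{x}_n^{(\ell)} \coloneqq (f^{(\ell)} \circ \cdots \circ f^{(1)})(\vec{x}_n)$, with $\vec{x}_n^{(0)} \coloneqq \vec{x}_n$. For each layer $\ell$, observe that $f^{(\ell)}$ interpolates the data $\curly{(\vec{x}_n^{(\ell-1)}, \vec{x}_n^{(\ell)})}_{n=1}^N$, which is feasible since equal inputs force equal outputs (they are values of the single function $f^{(\ell)}$). By \cref{thm:vv-banach-rep-thm} there is a vector-valued single-hidden-layer ReLU network $\tilde{f}^{(\ell)}$ of the form \cref{eq:vv-banach-rep-soln} with at most $N d_\ell$ neurons interpolating the same data and satisfying $\norm{\tilde{f}^{(\ell)}}_{\RBV^2} \leq \norm{f^{(\ell)}}_{\RBV^2}$. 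Because $\tilde{f}^{(\ell)}$ agrees with $f^{(\ell)}$ on $\curly{\vec{x}_n^{(\ell-1)}}_n$, replacing $f^{(\ell)}$ by $\tilde{f}^{(\ell)}$ leaves every $\vec{x}_n^{(\ell)}$, and hence the data-fitting term, unchanged while not increasing the $\ell$th regularization term; carrying this out for all $\ell$ produces a minimizer each of whose layers is a single-hidden-layer network, which is precisely the recursion \cref{eq:deep-architecture} with $\mat{V}^{(\ell)}, \mat{W}^{(\ell)}, \vec{b}^{(\ell)}, \mat{C}^{(\ell)}, \vec{c}_0^{(\ell)}$ read off from \cref{eq:vv-banach-rep-soln} and $K^{(\ell)} \leq N d_\ell$. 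I expect the main obstacle to be the induction in the existence step: controlling the composition in the limit requires combining weak$^\ast$ convergence (which only controls fixed evaluation points) with the uniform Lipschitz bound (which absorbs the moving arguments), which is exactly what \cref{lemma:Lipschitz-bound} and the weak$^\ast$-continuous point evaluation of \cref{lemma:vv-top-props} were set up to provide.
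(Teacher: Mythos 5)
Your proposal is correct and follows essentially the same route as the paper: existence via weak$^*$ compactness (Banach--Alaoglu) together with weak$^*$ lower semicontinuity of the objective, followed by a layer-wise replacement of each $f^{(\ell)}$ using \cref{thm:vv-banach-rep-thm} applied to the induced interpolation data $\{(\vec{x}_n^{(\ell-1)}, \vec{x}_n^{(\ell)})\}_{n=1}^N$, which leaves the data-fitting term unchanged and does not increase any regularization term. Your explicit inductive splitting (uniform Lipschitz bound to absorb the moving argument, plus weak$^*$ continuity of evaluation at the fixed limit point) is a careful rendering of the paper's terser claim that the composition is weak$^*$ continuous in each layer; the only step you leave implicit is that extracting weak$^*$ convergent \emph{subsequences} rather than subnets requires metrizability of the weak$^*$ topology on bounded sets, which holds here since the predual of each $\RBV^2$ space is separable (or can be avoided entirely, as the paper does, by working with compactness and the generalized Weierstrass theorem directly).
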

\begin{remark}
    Note that the search space in \cref{eq:deep-variational} is over the Cartesian product
    \begin{equation}
        \RBV^2(\R^{d_0}; \R^{d_1})
        \times \cdots \times
        \RBV^2(\R^{d_{L - 1}}; \R^{d_L})
        \label{eq:cartesian-search-space}
    \end{equation}
    rather than $\RBV^2_\mathsf{deep}(L)$. This is because given a function $f \in \RBV^2_\mathsf{deep}(L)$, there could be many decompositions such that $f = f^{(L)} \circ \cdots \circ f^{(1)}$. Therefore, in order for the regularization term in \cref{eq:deep-variational} to be well-defined, we formulate the problem over \cref{eq:cartesian-search-space}.
\end{remark}
\begin{remark}
    \Cref{thm:deep-representer} also holds for the problem of interpolating scattered data.
\end{remark}

The neural network architecture that appears in \cref{eq:deep-architecture} can be seen in \cref{fig:deep-architecture}. Moreover, this exact architecture was recently studied in the empirical work in~\cite{golubeva2020wider}, and is referred to as a deep ReLU network with \emph{linear bottlenecks}.  Since the variational problem in \cref{eq:deep-variational} is reminiscent of the variational problems studied in variational spline theory and since the resulting deep ReLU network solution in \cref{eq:deep-solution} is a continuous piecewise-linear function,
in a similar vein to~\cite{representer-deep, deep-splines-lipschitz, deep-splines},
we refer to such functions as \emph{deep ridge splines} of degree one.

\begin{figure}[htbp]
    \centering
    \def\svgwidth{0.85\columnwidth}
\begingroup%
  \makeatletter%
  \providecommand\color[2][]{%
    \errmessage{(Inkscape) Color is used for the text in Inkscape, but the package 'color.sty' is not loaded}%
    \renewcommand\color[2][]{}%
  }%
  \providecommand\transparent[1]{%
    \errmessage{(Inkscape) Transparency is used (non-zero) for the text in Inkscape, but the package 'transparent.sty' is not loaded}%
    \renewcommand\transparent[1]{}%
  }%
  \providecommand\rotatebox[2]{#2}%
  \newcommand*\fsize{\dimexpr\f@size pt\relax}%
  \newcommand*\lineheight[1]{\fontsize{\fsize}{#1\fsize}\selectfont}%
  \ifx\svgwidth\undefined%
    \setlength{\unitlength}{508.30874634bp}%
    \ifx\svgscale\undefined%
      \relax%
    \else%
      \setlength{\unitlength}{\unitlength * \real{\svgscale}}%
    \fi%
  \else%
    \setlength{\unitlength}{\svgwidth}%
  \fi%
  \global\let\svgwidth\undefined%
  \global\let\svgscale\undefined%
  \makeatother%
  \begin{picture}(1,0.72962149)%
    \lineheight{1}%
    \setlength\tabcolsep{0pt}%
    \put(0,0){\includegraphics[width=\unitlength,page=1]{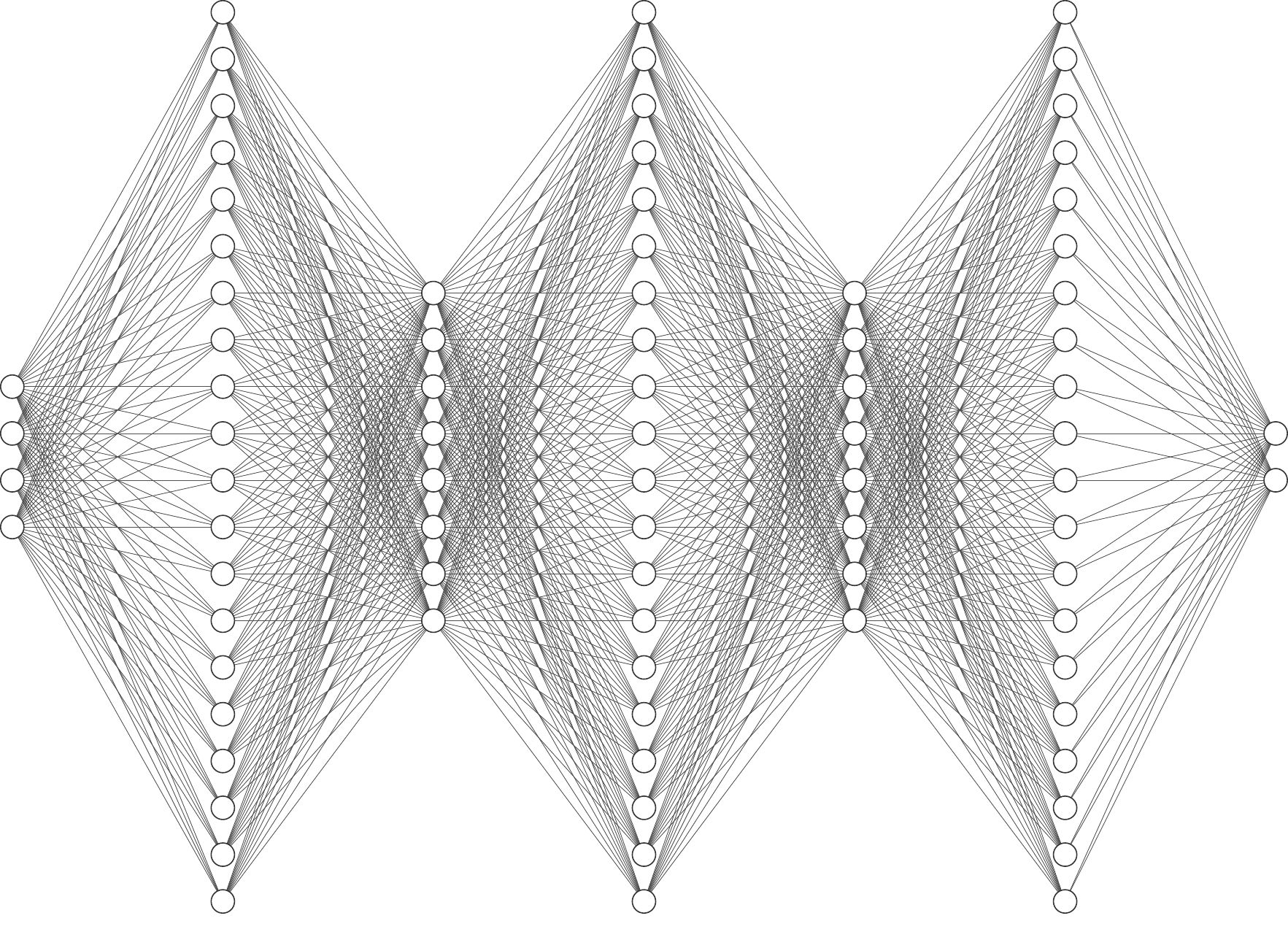}}%
    \put(0.07382618,0.01391909){\color[rgb]{0,0,0}\makebox(0,0)[lt]{\lineheight{1.25}\smash{\begin{tabular}[t]{l}$\vec{w}_k^{(1)}$\end{tabular}}}}%
    \put(0.72920782,0.01440395){\color[rgb]{0,0,0}\makebox(0,0)[lt]{\lineheight{1.25}\smash{\begin{tabular}[t]{l}$\vec{w}_k^{(3)}$\end{tabular}}}}%
    \put(0.40089443,0.01435514){\color[rgb]{0,0,0}\makebox(0,0)[lt]{\lineheight{1.25}\smash{\begin{tabular}[t]{l}$\vec{w}_k^{(2)}$\end{tabular}}}}%
    \put(0.23601285,0.01454219){\color[rgb]{0,0,0}\makebox(0,0)[lt]{\lineheight{1.25}\smash{\begin{tabular}[t]{l}$\vec{v}_k^{(1)}$\end{tabular}}}}%
    \put(0.5642077,0.01484379){\color[rgb]{0,0,0}\makebox(0,0)[lt]{\lineheight{1.25}\smash{\begin{tabular}[t]{l}$\vec{v}_k^{(2)}$\end{tabular}}}}%
    \put(0.8900825,0.01507928){\color[rgb]{0,0,0}\makebox(0,0)[lt]{\lineheight{1.25}\smash{\begin{tabular}[t]{l}$\vec{v}_k^{(3)}$\end{tabular}}}}%
    \put(0,0){\includegraphics[width=\unitlength,page=2]{nn.pdf}}%
  \end{picture}%
\endgroup%

    \caption{This figure shows the architecture of the deep neural network in \cref{eq:deep-architecture} in the case of $L = 3$ hidden layers. The black nodes denote input nodes, the blue nodes denote ReLU nodes, and the gray nodes denote linear nodes. Skip connection nodes are omitted for clarity.}
    \label{fig:deep-architecture}
\end{figure}

\begin{remark}
    Since the regularizer in \cref{eq:deep-variational} directly controls the $\RBV^2(\R^{d_{\ell - 1}}; \R^{d_\ell})$-norm of each layer, we see from \cref{lemma:Lipschitz-bound}, that the variational problem is essentially regularizing a bound on the Lipschitz constant of the function.
\end{remark}

\begin{remark} \label{rem:monotone-functions-regularizer}
    The regularizer that appears in \cref{eq:deep-variational} can be replaced by
    \[
        \psi_0\paren{\sum_{\ell = 1}^L \psi_\ell\paren{\norm*{f^{(\ell)}}_{\RBV^2(\R^{d_{\ell - 1}}; \R^{d_\ell})}}},
    \]
    where $\psi_\ell: [0,\infty) \to \R$, $\ell = 0, \ldots, L$ is a strictly increasing and convex function, and still admit a solution that takes the form of a deep neural network as in \cref{eq:deep-solution}. Thus, there are many choices of regularization that result in a representer theorem for deep ReLU networks.
\end{remark}

\begin{remark} \label{rem:standard-arch}
    Notice that \cref{eq:deep-solution} is precisely the standard $L$-hidden layer deep ReLU network architecture with \emph{rank bounded weight matrices} and \emph{skip connections}. Indeed, the weight matrix of the $\ell$th layer is $\mat{A}^{(\ell)} \coloneqq \mat{W}^{(\ell + 1)}\mat{V}^{(\ell)}$.
    More specifically, by dropping biases and skip connections for clarity, we see that $s(\vec{x})$ in \cref{eq:deep-solution} can be computed recursively as
    \begin{equation}
        \begin{cases}
            \tilde{\vec{x}}^{(0)} \coloneqq \vec{x}, \\
            \tilde{\vec{x}}^{(\ell)} \coloneqq \vec{\rho}(\mat{A}^{(\ell-1)}\tilde{\vec{x}}^{(\ell - 1)}), & \ell = 1, \ldots, L, \\
            s(\vec{x}) \coloneqq \mat{A}^{(L)}\tilde{\vec{x}}^{(L)},
        \end{cases}
        \label{eq:nn-arch-no-bias-no-skip}
    \end{equation}
    where
    \[
        \begin{cases}
            \mat{A}^{(0)} \coloneqq \mat{W}^{(1)}, \\
            \mat{A}^{(\ell)} \coloneqq \mat{W}^{(\ell+1)}\mat{V}^{(\ell)}, & \ell = 2, \ldots, L - 1, \\
            \mat{A}^{(L)} \coloneqq \mat{V}^{(L)}.
        \end{cases}
    \]
    From the dimensions of $\mat{V}^{(\ell)}$ and $\mat{W}^{(\ell)}$ in \cref{thm:deep-representer}, we see that for $\ell = 0, \ldots, L$, $\rank (\mat{A}^{(\ell)}) \leq \min \curly{N d_{\ell + 1}, d_{\ell}}$ and $\rank(\mat{A}^{(L)}) \leq d_L$. In a typical scenario, where the $\curly{d_\ell}_{\ell=1}^L$ are of the same order, this says that $\rank(\mat{A}^{(\ell)}) \leq d_\ell$.
\end{remark}
\begin{remark}
    The architecture of the network in \cref{eq:deep-architecture} is not restrictive of what functions can be represented by such a network. In particular, the architecture in \cref{eq:deep-architecture} is as expressive as the standard deep ReLU network architecture with hidden layer widths of $d_1, \ldots, d_{L}$.
\end{remark}

\begin{proof}[Proof of \cref{thm:deep-representer}]

    Given $f  = f^{(L)} \circ \cdots \circ f^{(1)}$ such that $f^{(\ell)} \in \RBV^2(\R^{d_{\ell-1}}; \R^{d_\ell})$, $\ell=1, \ldots, L$, write
    \[
        \J(f) \coloneqq \J(f^{(1)}, \ldots, f^{(L)}) \coloneqq \sum_{n=1}^N \ell(\vec{y}_n, f(\vec{x}_n)) + \lambda \sum_{\ell = 1}^L \norm*{f^{(\ell)}}_{\RBV^2(\R^{d_{\ell - 1}}; \R^{d_\ell})}
    \]
    for the objective value of $f$. Next, consider an arbitrary $g  = g^{(L)} \circ \cdots \circ g^{(1)}$ such that $g^{(\ell)} \in \RBV^2(\R^{d_{\ell-1}}; \R^{d_\ell})$, $\ell=1, \ldots, L$, with objective value $C \coloneqq \J(g)$.
    We may transform the unconstrained problem in \cref{eq:deep-variational} into the equivalent constrained problem
    \begin{equation}
        \min_{\substack{f^{(\ell)} \in \RBV^2(\R^{d_{\ell-1}}; \R^{d_\ell}) \\ \ell = 1, \ldots, L \\ f = f^{(L)} \circ \cdots \circ f^{(1)}}} \: \J(f)
        \quad\subj\quad \norm*{f^{(\ell)}}_{\RBV^2(\R^{d_{\ell - 1}}; \R^{d_\ell})} \leq C/\lambda, \: \ell = 1, \ldots, L.
        \label{eq:deep-constrained}
    \end{equation}
    This transformation is valid since any function that does not satisfy the constraints in \cref{eq:deep-constrained} has a strictly larger objective value than $g$, and is therefore not in the solution set.
    
    For $f_0  = f_0^{(L)} \circ \cdots \circ f_0^{(1)}$, $f_0^{(\ell)} \in \RBV^2(\R^{d_{\ell-1}}; \R^{d_\ell})$, $\ell=1, \ldots, L$, we will show that the map $f_0^{(\tilde{\ell})} \mapsto \J(f_0)$, for a fixed $\tilde{\ell} \in \curly{1, \ldots, L}$, is weak$^*$ lower semi-continuous on $\RBV^2(\R^{d_{\tilde{\ell} - 1}}; \R^{d_{\tilde{\ell}}})$. First notice that the map $f_0^{(\tilde{\ell})} \mapsto f_0(\vec{x}_0)$, for any $\vec{x}_0 \in \R^d$, is component-wise weak$^*$ continuous on $\RBV^2(\R^{d_{\tilde{\ell}-1}}; \R^{d_{\tilde{\ell}}})$.
    Indeed, since each $f_0^{(\ell)}$, $\ell=1, \ldots, L$, is component-wise continuous by \cref{lemma:Lipschitz-bound} and since the point evaluation is component-wise weak$^*$ continuous by \cref{lemma:vv-top-props}, the map $f_0^{(\tilde{\ell})} \mapsto f_0^{(L)} \circ \cdots \circ f_0^{(1)}(\vec{x}_0)$ is made up of compositions of component-wise continuous and component-wise weak$^*$ continuous functions, and is therefore itself component-wise weak$^*$ continuous on $\RBV^2(\R^{d_{\tilde{\ell}- 1}}; \R^{d_{\tilde{\ell}}})$. Next, since the loss function is lower semi-continuous and every norm is weak$^*$ continuous on its corresponding Banach space, we have that $f_0^{(\tilde{\ell})} \mapsto \J(f_0)$ is weak$^*$ lower semi-continuous on $\RBV^2(\R^{d_{\tilde{\ell}- 1}}; \R^{d_{\tilde{\ell}}})$. Therefore, $(f_0^{(1)}, \ldots, f_0^{(L)}) \mapsto \J(f)$ is weak$^*$ continuous over the Cartesian product in \cref{eq:cartesian-search-space}. Finally, by the Banach--Alaoglu theorem~\cite[Chapter~3]{rudin-functional}, the feasible set in \cref{eq:deep-constrained} is weak$^*$ compact. Thus, there exists a solution to \cref{eq:deep-constrained} (and subsequently \cref{eq:deep-variational}) by the Weierstrass extreme value theorem on general topological spaces~\cite[Chapter~5]{convex-functional-analysis}.
    
    Let $\tilde{s} = \tilde{s}^{(L)} \circ \cdots \circ \tilde{s}^{(1)}$ be a (not necessarily unique) solution to \cref{eq:deep-variational}. 
    By applying $\tilde{s}$ to each data point $\vec{x}_n$, $n = 1, \ldots, N$, we can recursively compute the intermediate vectors $\vec{z}_{n, \ell} \in \R^{d_{\ell}}$ as follows
    \begin{itemize}
        \item Initialize $\vec{z}_{n, 0} \coloneqq \vec{x}_n$.
        \item For each $\ell = 1, \ldots, L$, recursively update $\vec{z}_{n, \ell} \coloneqq \tilde{s}^{(\ell)}(\vec{z}_{n, \ell - 1})$.
    \end{itemize}
    The solution $\tilde{s}$ must satisfy
    \[
        \tilde{s}^{(\ell)} \in \argmin_{f \in \RBV^2(\R^{d_{\ell - 1}}; \R^{d_\ell})}  \norm{f}_{\RBV^2(\R^{d_{\ell - 1}}; \R^{d_\ell})} \quad\subj\quad f(\vec{z}_{n, \ell - 1}) = \vec{z}_{n, \ell}, \: n = 1, \ldots, N,
    \]
    for $\ell = 1, \ldots, L$. To see this, note that if the above display did not hold, it would contradict the optimality of $\tilde{s}$. By \cref{thm:vv-banach-rep-thm}, there always exists a solution to the above display that enforces the form of the solution in \cref{eq:deep-solution}.
\end{proof}

\section{Applications to Deep Network Training and Regularization} \label{sec:nn-apps}
In this section we will discuss applications of the representer theorem in \cref{thm:deep-representer} to the training and regularization of deep ReLU networks. Since \cref{thm:deep-representer} guarantees existence of a solution to the variational problem in \cref{eq:deep-variational} that is realizable by a deep ReLU network as in \cref{eq:deep-solution}, one can find a solution to the problem in \cref{eq:deep-variational} by finding a solution to a finite-dimensional deep network training problem.
\begin{lemma} \label{lemma:deep-nn-norm}
    Given a deep neural network $s = s^{(L)} \circ \cdots \circ s^{(1)}$ as in \cref{eq:deep-solution},
    \begin{align*}
        &\sum_{\ell = 1}^L\norm*{s^{(\ell)}}_{\RBV^2(\R^{d_{\ell - 1}}; \R^{d_\ell})} \\
        &\qquad=
        \sum_{\ell = 1}^L \paren{\sum_{k=1}^{K^{(\ell)}} \norm*{\vec{v}_k^{(\ell)}}_1 \norm*{\vec{w}_k^{(\ell)}}_2 + 
        \sum_{m=1}^D \paren{\abs*{s_m^{(\ell)}(\vec{0})} + \sum_{n=1}^d \abs*{s_m^{(\ell)}(\vec{e}_n) - s_m^{(\ell)}(\vec{0})}}}
    \end{align*}
    where $\vec{v}_k^{(\ell)}$ is the $k$th column of $\mat{V}^{(\ell)}$ and $\vec{w}_k^{(\ell)}$ is the $k$th row of $\mat{W}^{(\ell)}$.
\end{lemma}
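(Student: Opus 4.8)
The plan is to recognize \cref{lemma:deep-nn-norm} as an immediate layerwise consequence of the single-layer norm formula in \cref{lemma:vv-nn-norm}, so that the whole argument reduces to bookkeeping rather than any new analysis. First I would observe that each factor $s^{(\ell)}$ in the composition $s = s^{(L)} \circ \cdots \circ s^{(1)}$ is, by the architecture in \cref{eq:deep-architecture}, precisely a vector-valued single-hidden layer ReLU network
\[
    s^{(\ell)}(\vec{x}) = \mat{V}^{(\ell)} \vec{\rho}(\mat{W}^{(\ell)} \vec{x} - \vec{b}^{(\ell)}) + \mat{C}^{(\ell)}\vec{x} + \vec{c}_0^{(\ell)}
\]
mapping $\R^{d_{\ell-1}} \to \R^{d_\ell}$, which is exactly the form treated in \cref{lemma:vv-nn-norm}.

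The key step is to expand the matrix--vector product into the sum-of-ridge-functions form used in \cref{lemma:vv-nn-norm}. Writing out $\mat{V}^{(\ell)} \vec{\rho}(\mat{W}^{(\ell)} \vec{x} - \vec{b}^{(\ell)})$ component by component, the $k$th entry of $\mat{W}^{(\ell)} \vec{x}$ is $(\vec{w}_k^{(\ell)})^\T \vec{x}$ with $\vec{w}_k^{(\ell)}$ the $k$th row of $\mat{W}^{(\ell)}$, and left-multiplication by $\mat{V}^{(\ell)}$ gives $\sum_{k=1}^{K^{(\ell)}} \vec{v}_k^{(\ell)} \rho((\vec{w}_k^{(\ell)})^\T \vec{x} - b_k^{(\ell)})$ with $\vec{v}_k^{(\ell)}$ the $k$th column of $\mat{V}^{(\ell)}$. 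With this identification of the weights, applying \cref{lemma:vv-nn-norm} to $s^{(\ell)}$ (taking $d = d_{\ell-1}$ and $D = d_\ell$) yields
\[
    \norm*{s^{(\ell)}}_{\RBV^2(\R^{d_{\ell-1}}; \R^{d_\ell})} = \sum_{k=1}^{K^{(\ell)}} \norm*{\vec{v}_k^{(\ell)}}_1 \norm*{\vec{w}_k^{(\ell)}}_2 + \sum_{m=1}^{d_\ell} \paren{\abs*{s_m^{(\ell)}(\vec{0})} + \sum_{n=1}^{d_{\ell-1}} \abs*{s_m^{(\ell)}(\vec{e}_n) - s_m^{(\ell)}(\vec{0})}}.
\]
Summing this identity over $\ell = 1, \ldots, L$ produces the claimed expression, where the unindexed limits $D$ and $d$ in the statement are read as $d_\ell$ and $d_{\ell-1}$ for the $\ell$th summand.

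Finally, I would note that there is no genuine obstacle here: the only point requiring care is that the regularizer is summed over the Cartesian-product factors rather than over the composed function $s$ itself, so each $\norm*{s^{(\ell)}}_{\RBV^2(\R^{d_{\ell-1}}; \R^{d_\ell})}$ is evaluated on its own layer with no cross-layer interaction, and additivity over $\ell$ is immediate. The substantive content --- that the per-layer $\RBV^2$-norm splits into a path-norm term plus the explicitly regularized affine part --- has already been established in \cref{lemma:vv-nn-norm}, and this lemma merely aggregates it across the network.
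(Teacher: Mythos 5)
Your proposal is correct and follows exactly the paper's own argument: the paper's proof is the one-line invocation of \cref{lemma:vv-nn-norm} on each $s^{(\ell)}$, $\ell = 1, \ldots, L$, followed by summation over layers. Your additional bookkeeping (identifying $\vec{v}_k^{(\ell)}$, $\vec{w}_k^{(\ell)}$ as columns/rows and reading $D$, $d$ as $d_\ell$, $d_{\ell-1}$) just makes that step explicit.
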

\begin{proof}
    The proof follows by invoking \cref{lemma:vv-nn-norm} on each $s^{(\ell)}$, $\ell = 1, \ldots, L$.
\end{proof}

\Cref{lemma:deep-nn-norm} implies the following corollary to \cref{thm:deep-representer}.
\begin{corollary} \label{cor:nn-problem-path-norm}
    Let $\vec{\theta}$ denote the parameters of a deep neural network as in \cref{eq:deep-solution} and let $\Theta = \R^M$ denote the space of these parameters, where $M$ is the total number of scalar parameters in the network. Write $f_\vec{\theta}$ to denote a deep neural network parameterized by $\vec{\theta}$. Then, the solutions to the finite-dimensional neural network training problem
    \begin{equation}
        \begin{aligned}
        &\min_{\vec{\theta} \in \Theta} \: \sum_{n=1}^N \ell(\vec{y}_n, f_\vec{\theta}(\vec{x}_n)) \\
        &\qquad+ \lambda \sum_{\ell = 1}^L \paren{\sum_{k=1}^{K^{(\ell)}} \norm*{\vec{v}_k^{(\ell)}}_1 \norm*{\vec{w}_k^{(\ell)}}_2 +
        \sum_{m=1}^D \paren{\abs*{f_{\vec{\theta},m}^{(\ell)}(\vec{0})} + \sum_{n=1}^d \abs*{f_{\vec{\theta},m}^{(\ell)}(\vec{e}_n) - f_{\vec{\theta},m}^{(\ell)}(\vec{0})}}}
        \end{aligned}
        \label{eq:nn-problem-path-norm}
    \end{equation}
    where $\curly{(\vec{x}_n, \vec{y}_n)}_{n=1}^N \subset \R^{d_0} \times \R^{d_L}$ is a scattered data set, $\ell(\dummy, \dummy)$ is an arbitrary non-negative lower semi-continuous loss function, and $\lambda > 0$ is an adjustable regularization parameter, are solutions to \cref{eq:deep-variational} so long as $K^{(\ell)} \geq N d_\ell$.
\end{corollary}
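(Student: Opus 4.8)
The plan is to recognize that the finite-dimensional objective in \cref{eq:nn-problem-path-norm} is precisely the restriction of the variational objective in \cref{eq:deep-variational} to the parametric class of networks of the form \cref{eq:deep-solution}, and to show that this restriction is lossless once the widths $K^{(\ell)}$ are large enough. Write $P(\vec{\theta})$ for the objective in \cref{eq:nn-problem-path-norm} and, as in the proof of \cref{thm:deep-representer}, write $\J(f)$ for the variational objective in \cref{eq:deep-variational}. The first step is the key bookkeeping identity: by \cref{lemma:deep-nn-norm} the regularizer appearing in $P(\vec{\theta})$ coincides term-by-term with $\sum_{\ell=1}^L \norm*{f_\vec{\theta}^{(\ell)}}_{\RBV^2(\R^{d_{\ell-1}}; \R^{d_\ell})}$, and the data-fidelity terms agree trivially, so that $P(\vec{\theta}) = \J(f_\vec{\theta})$ for every $\vec{\theta} \in \Theta$.

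Next I would establish two inequalities relating $\inf_{\vec{\theta}} P(\vec{\theta})$ to the optimal value $V^\star$ of \cref{eq:deep-variational}, which is attained by \cref{thm:deep-representer}. For the lower bound, each layer $f_\vec{\theta}^{(\ell)}$ is a vector-valued single-hidden-layer ReLU network and hence lies in $\RBV^2(\R^{d_{\ell-1}}; \R^{d_\ell})$, so every $f_\vec{\theta}$ is feasible for the variational problem; combined with $P(\vec{\theta}) = \J(f_\vec{\theta})$ this gives $\inf_{\vec{\theta}} P(\vec{\theta}) \geq V^\star$. For the reverse inequality I would invoke \cref{thm:deep-representer} to produce a variational minimizer $\tilde{s} = \tilde{s}^{(L)} \circ \cdots \circ \tilde{s}^{(1)}$ realized by the architecture \cref{eq:deep-architecture} with at most $N d_\ell$ neurons in layer $\ell$. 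Since the hypothesis $K^{(\ell)} \geq N d_\ell$ guarantees the parametric architecture has at least as many neurons per layer, I can encode $\tilde{s}$ by a parameter vector $\tilde{\vec{\theta}}$, padding each layer with the required number of inactive neurons by setting their output weights $\vec{v}_k^{(\ell)} = \vec{0}$. This padding changes neither the function (the padded neurons contribute nothing) nor the regularizer (each padded term $\norm*{\vec{v}_k^{(\ell)}}_1 \norm*{\vec{w}_k^{(\ell)}}_2$ vanishes and the affine terms depend only on the unchanged function values), so $P(\tilde{\vec{\theta}}) = \J(\tilde{s}) = V^\star$ and therefore $\inf_{\vec{\theta}} P(\vec{\theta}) \leq V^\star$.

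Combining the two inequalities yields $\min_{\vec{\theta}} P(\vec{\theta}) = V^\star$, attained at $\tilde{\vec{\theta}}$. To conclude, I would take an arbitrary minimizer $\vec{\theta}^\star$ of \cref{eq:nn-problem-path-norm}; then $\J(f_{\vec{\theta}^\star}) = P(\vec{\theta}^\star) = V^\star$ by the identity $P(\vec{\theta}) = \J(f_\vec{\theta})$, so $f_{\vec{\theta}^\star}$ attains the variational optimum and is thus a solution to \cref{eq:deep-variational}, as claimed.

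The step I expect to be the main (and essentially the only) obstacle is the padding argument together with its reliance on the width condition: one must verify that an $\RBV^2$-optimal network carrying at most $N d_\ell$ neurons per layer embeds into the fixed-width architecture of \cref{eq:nn-problem-path-norm} at no cost in either the represented function or the regularizer, which is exactly what $K^{(\ell)} \geq N d_\ell$ secures. Everything else reduces to the exact norm identity of \cref{lemma:deep-nn-norm} and the existence of a network-form minimizer guaranteed by \cref{thm:deep-representer}.
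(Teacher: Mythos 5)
Your proposal is correct and is essentially the paper's own argument: the paper proves this corollary simply by combining \cref{lemma:deep-nn-norm} (which gives your identity $P(\vec{\theta}) = \J(f_\vec{\theta})$) with \cref{thm:deep-representer} (which supplies a variational minimizer realizable with at most $N d_\ell$ neurons per layer), and your two-sided sandwich with the zero-output-weight padding is precisely the reasoning left implicit there.
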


We can also consider a different regularizer than the one in \cref{cor:nn-problem-path-norm} that results in a finite-dimensional neural network training problem equivalent to \cref{eq:nn-problem-path-norm}.

\begin{corollary} \label{cor:nn-problem-weight-decay}
    The solutions to
    \begin{equation}
        \begin{aligned}
        &\min_{\vec{\theta} \in \Theta} \: \sum_{n=1}^N \ell(\vec{y}_n, f_\vec{\theta}(\vec{x}_n)) \\
        &\qquad+ \lambda \sum_{\ell = 1}^L \paren{\frac{\norm*{\mat{V}^{(\ell)}}_{1,2}^2 + \norm*{\mat{W}^{(\ell)}}_{\mathsf{F}}^2}{2} +
        \sum_{m=1}^D \paren{\abs*{f_{\vec{\theta},m}^{(\ell)}(\vec{0})} + \sum_{n=1}^d \abs*{f_{\vec{\theta},m}^{(\ell)}(\vec{e}_n) - f_{\vec{\theta},m}^{(\ell)}(\vec{0})}}}
        \end{aligned}
        \label{eq:nn-problem-weight-decay}
    \end{equation}
    are also solutions to \cref{eq:nn-problem-path-norm}, where
    \[
        \norm*{\mat{V}^{(\ell)}}_{1,2}^2 \coloneqq \sum_{k=1}^{K^{(\ell)}} \norm*{\vec{v}_k^{(\ell)}}_1^2
    \]
    is the mixed $\ell^1\ell^2$-norm of $\mat{V}^{(\ell)}$ and $\norm{\dummy}_\mathsf{F}$ is the usual Frobenius norm of a matrix. Moreover, the solutions to \cref{eq:nn-problem-weight-decay} satisfy the property that $\norm*{\vec{v}_k^{(\ell)}}_1 = \norm*{\vec{w}_k^{(\ell)}}_2$, $\ell = 1, \ldots, L$, $k = 1, \ldots, K^{(\ell)}$.
\end{corollary}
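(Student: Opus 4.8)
The plan is to exploit the positive homogeneity of the ReLU together with the elementary inequality $ab \le (a^2+b^2)/2$. First I would observe that, because $\rho$ is positively homogeneous of degree one, the per-neuron reparameterization $(\vec{v}_k^{(\ell)}, \vec{w}_k^{(\ell)}, b_k^{(\ell)}) \mapsto (\vec{v}_k^{(\ell)}/\alpha,\, \alpha \vec{w}_k^{(\ell)},\, \alpha b_k^{(\ell)})$ with $\alpha > 0$ leaves the realized network function $f_\vec{\theta}$ unchanged (cf.\ \cref{rem:rescale,rem:vv-rescale}), and hence leaves both the data-fitting loss $\sum_n \ell(\vec{y}_n, f_\vec{\theta}(\vec{x}_n))$ and the skip-connection/bias terms untouched. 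Under this rescaling $\norm{\vec{v}_k^{(\ell)}}_1 \mapsto \norm{\vec{v}_k^{(\ell)}}_1/\alpha$ and $\norm{\vec{w}_k^{(\ell)}}_2 \mapsto \alpha\norm{\vec{w}_k^{(\ell)}}_2$, so the path-norm summand $\norm{\vec{v}_k^{(\ell)}}_1 \norm{\vec{w}_k^{(\ell)}}_2$ of \cref{eq:nn-problem-path-norm} is invariant, whereas the weight-decay summand $\tfrac12(\norm{\vec{v}_k^{(\ell)}}_1^2 + \norm{\vec{w}_k^{(\ell)}}_2^2)$ of \cref{eq:nn-problem-weight-decay} is not.

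The heart of the argument is the AM--GM inequality applied neuron by neuron: $\norm{\vec{v}_k^{(\ell)}}_1 \norm{\vec{w}_k^{(\ell)}}_2 \le \tfrac12(\norm{\vec{v}_k^{(\ell)}}_1^2 + \norm{\vec{w}_k^{(\ell)}}_2^2)$, with equality if and only if $\norm{\vec{v}_k^{(\ell)}}_1 = \norm{\vec{w}_k^{(\ell)}}_2$. Summing over $k$ and $\ell$ and recalling $\norm{\mat{V}^{(\ell)}}_{1,2}^2 = \sum_k \norm{\vec{v}_k^{(\ell)}}_1^2$ and $\norm{\mat{W}^{(\ell)}}_{\mathsf F}^2 = \sum_k \norm{\vec{w}_k^{(\ell)}}_2^2$ shows that, for every $\vec{\theta}$, the objective of \cref{eq:nn-problem-weight-decay} dominates the objective of \cref{eq:nn-problem-path-norm}. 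This already yields $p^*_{\mathsf{pn}} \le p^*_{\mathsf{wd}}$ for the two optimal values. For the reverse inequality I would, given any $\vec{\theta}$, choose for each neuron the balancing scale $\alpha_k^{(\ell)} = (\norm{\vec{v}_k^{(\ell)}}_1/\norm{\vec{w}_k^{(\ell)}}_2)^{1/2}$, handling the degenerate cases $\vec{v}_k^{(\ell)}=\vec{0}$ or $\vec{w}_k^{(\ell)}=\vec{0}$ separately (in either case optimality lets one zero out both vectors without changing the output). At the resulting balanced parameter $\vec{\theta}'$ the two regularizers coincide while the loss, skip, and path-norm values are all preserved; hence the weight-decay objective at $\vec{\theta}'$ equals the path-norm objective at $\vec{\theta}$, giving $p^*_{\mathsf{wd}} \le p^*_{\mathsf{pn}}$ and therefore equality.

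Finally, to conclude that a minimizer $\vec{\theta}^*$ of \cref{eq:nn-problem-weight-decay} is also a minimizer of \cref{eq:nn-problem-path-norm}, I would first establish that $\vec{\theta}^*$ is balanced: viewing the weight-decay contribution of a single neuron as $\tfrac12(\norm{\vec{v}_k^{(\ell)}}_1^2/\alpha^2 + \alpha^2\norm{\vec{w}_k^{(\ell)}}_2^2)$ and using that it must already be minimized at $\alpha = 1$ forces $\norm{\vec{v}_k^{(\ell)}}_1 = \norm{\vec{w}_k^{(\ell)}}_2$, which is exactly the asserted ``moreover'' property. Balancedness makes the AM--GM bound tight at $\vec{\theta}^*$, so the path-norm objective there equals the weight-decay objective, which equals $p^*_{\mathsf{wd}} = p^*_{\mathsf{pn}}$; hence $\vec{\theta}^*$ minimizes \cref{eq:nn-problem-path-norm}.

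I expect the only delicate point to be the bookkeeping of degenerate zero-weight neurons in the balancing step, where the clean rescaling $\alpha_k^{(\ell)}$ is undefined; everything else is the clean interplay between the homogeneity-induced rescaling invariance of the path norm and the tightness condition in AM--GM.
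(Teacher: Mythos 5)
Your proposal is correct and follows essentially the same route as the paper's proof: exploit the positive homogeneity of the ReLU to rescale each neuron so that $\norm{\vec{v}_k^{(\ell)}}_1 = \norm{\vec{w}_k^{(\ell)}}_2$, then use the AM--GM equality case to identify the weight-decay summand with the path-norm summand at balanced parameters. Your write-up merely makes the two-sided comparison of optimal values and the degenerate zero-weight cases explicit, which the paper leaves implicit.
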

\begin{proof}
    The $k$th neuron in the $\ell$th layer of a deep neural network as in \cref{eq:deep-solution} takes the form $\vec{x} \mapsto \vec{v}_k^{(\ell)}\rho({\vec{w}_k^{(\ell)}}^\T\vec{x} - b_k^{(\ell)})$. Due to the positive homogenity of the ReLU, $\vec{v}_k^{(\ell)}$ and $\vec{w}_k^{(\ell)}$ can be rescaled so that $\norm*{\vec{v}_k^{(\ell)}}_1 = \norm*{\vec{w}_k^{(\ell)}}_2$ without altering the function of the network. Therefore, minimizing $\norm*{\vec{v}_k^{(\ell)}}_1^2 + \norm*{\vec{w}_k^{(\ell)}}_2^2$ is achieved when $\norm*{\vec{v}_k^{(\ell)}}_1 = \norm*{\vec{w}_k^{(\ell)}}_2$. The result then follows from the fact that when $\norm*{\vec{v}_k^{(\ell)}}_1 = \norm*{\vec{w}_k^{(\ell)}}_2$ we have
    \[
        \frac{\norm*{\vec{v}_k^{(\ell)}}_1^2 + \norm*{\vec{w}_k^{(\ell)}}_2^2}{2} = \norm*{\vec{v}_k^{(\ell)}}_1 \norm*{\vec{w}_k^{(\ell)}}_2.
    \]
\end{proof}

\begin{remark}
    While the problems in \cref{eq:nn-problem-path-norm,eq:nn-problem-weight-decay} take the form of neural network training problems with new, principled forms of regularization, it's important to note that the problems are nonconvex, and our results say nothing about algorithms for actually solving the optimization problems.
\end{remark}

\begin{remark}
    Due to the sparsity-promoting nature of the $\RBV^2(\R^{d_{\ell - 1}}; \R^{d_\ell})$-norms, the regularizers that appear in \cref{eq:nn-problem-path-norm,eq:nn-problem-weight-decay} promote sparse (in the sense of the number of neurons) deep ReLU network solutions.
\end{remark}

\begin{remark}
    The term
    \[
        \sum_{m=1}^D \paren{\abs*{f_{\vec{\theta},m}^{(\ell)}(\vec{0})} + \sum_{n=1}^d \abs*{f_{\vec{\theta},m}^{(\ell)}(\vec{e}_n) - f_{\vec{\theta},m}^{(\ell)}(\vec{0})}}
    \]
    that appears in \cref{eq:nn-problem-path-norm,eq:nn-problem-weight-decay} simply imposes an $\ell^1$-norm on the coefficients of the affine part (i.e., the skip connection in the neural network realization) on each layer (see \cref{app:RBV-top-props}). Therefore, one may also consider the regularizers
    \begin{equation}
            \sum_{\ell = 1}^L \paren{\sum_{k=1}^{K^{(\ell)}} \norm*{\vec{v}_k^{(\ell)}}_1 \norm*{\vec{w}_k^{(\ell)}}_2 +
            \norm*{\mat{C}^{(\ell)}}_{1,1} + \norm*{\vec{c}_0^{(\ell)}}_1}
            \label{eq:path-with-skip-nn-norm}
    \end{equation}
    in place of \cref{eq:nn-problem-path-norm} or
    \begin{equation}
            \sum_{\ell = 1}^L \paren{\frac{\norm*{\mat{V}^{(\ell)}}_{1,2}^2 + \norm*{\mat{W}^{(\ell)}}_{\mathsf{F}}^2}{2} + \norm*{\mat{C}^{(\ell)}}_{1,1} + \norm*{\vec{c}_0^{(\ell)}}_1},
            \label{eq:weight-decay-with-skip-nn-norm}
    \end{equation}
    in place of \cref{eq:nn-problem-weight-decay}, where
    \[
        \norm{\mat{C}}_{1,1} \coloneqq \sum_{m=1}^D \sum_{k=1}^d \abs{c_{m, k}}
    \]
    denotes the mixed $\ell^1\ell^1$-norm of $\mat{C}$.
\end{remark}

\begin{remark} \label{rem:drop-skip-regularization}
    It is common in many deep learning papers to consider deep neural networks without biases and skip connections (see, e.g.,~\cite{path-norm,geometry-deep-learning,norm-based-capacity-control-deep,statistical-deep}). Since the term
    \begin{equation}
        \sum_{m=1}^D \paren{\abs*{f_{\vec{\theta},m}^{(\ell)}(\vec{0})} + \sum_{n=1}^d \abs*{f_{\vec{\theta},m}^{(\ell)}(\vec{e}_n) - f_{\vec{\theta},m}^{(\ell)}(\vec{0})}}
        \label{eq:skip-regularization}
    \end{equation}
    that appears in \cref{eq:nn-problem-path-norm,eq:nn-problem-weight-decay} simply imposes an $\ell^1$-norm on the coefficients of the affine part (i.e., the skip connection in the neural network realization) on each layer (see \cref{app:RBV-top-props}), the following two regularizers naturally arise from our variational framework in the case of a deep neural network with no biases or skip connections:
    \begin{equation}
        \sum_{\ell = 1}^L \sum_{k=1}^{K^{(\ell)}} \norm*{\vec{v}_k^{(\ell)}}_1 \norm*{\vec{w}_k^{(\ell)}}_2
        \label{eq:sum-of-path-nn-seminorm}
    \end{equation}
    or
    \begin{equation}
        \frac{1}{2}\sum_{\ell = 1}^L \norm*{\mat{V}^{(\ell)}}_{1,2}^2 + \norm*{\mat{W}^{(\ell)}}_{\mathsf{F}}^2,
        \label{eq:sum-of-squares-nn-seminorm}
    \end{equation}
    where \cref{eq:sum-of-path-nn-seminorm,eq:sum-of-squares-nn-seminorm} are in fact equivalent by the same argument as in the proof of \cref{cor:nn-problem-weight-decay}.
\end{remark}
\subsection{Connections to existing deep network regularization schemes.}
The regularizers that appear in \cref{eq:nn-problem-path-norm,eq:nn-problem-weight-decay,eq:path-with-skip-nn-norm,eq:weight-decay-with-skip-nn-norm,eq:sum-of-path-nn-seminorm,eq:sum-of-squares-nn-seminorm} are \emph{principled} regularizers for training deep ReLU networks. Moreover, the discussed regularizers are related to the well-known weight decay~\cite{weight-decay} and path-norm~\cite{path-norm} regularizers for deep ReLU networks.

Training a neural network with weight decay is one of the most common regularization schemes for deep ReLU networks. This corresponds to an $\ell^2$-norm regularization on all the network weights. The regularizer that appears in \cref{eq:sum-of-squares-nn-seminorm} almost takes the form of an $\ell^2$-norm of the network weights except that the regularization on the $\mat{V}^{(\ell)}$ is not the Frobenius norm. By considering a slightly different architecture than in \cref{eq:deep-architecture}, where it is imposed that the columns of $\mat{V}^{(\ell)}$ are $1$-sparse, the regularizer in \cref{eq:sum-of-squares-nn-seminorm} exactly corresponds to weight decay (since $\norm*{\mat{V}^{(\ell)}}_{1,2}^2 = \norm*{\mat{V}^{(\ell)}}_\mathsf{F}^2$ when the columns of $\mat{V}^{(\ell)}$ are $1$-sparse). Training this architecture with this regularizer still corresponds to finding a solution to the variational problem in \cref{eq:deep-variational} since it simply imposes that the outputs of each layer of the deep network are completely \emph{decoupled} (see \cref{rem:no-sharing}). The utility of \emph{not} considering such an architecture is to promote \emph{neuron sharing} between the outputs in each layer outputs of each layer.

Another common regularization scheme for deep ReLU networks is the path-norm  regularizer. In particular, several works~\cite{path-norm,geometry-deep-learning,norm-based-capacity-control-deep,statistical-deep} consider deep ReLU networks with no biases or skip connections mapping $\R^d \to \R$ of the form $s(\vec{x}) = x^{(L)}$, where $x^{(L)}$ is computed via
\begin{equation}
    \begin{cases}
        \vec{x}^{(0)} \coloneqq \vec{x}, \\
        \vec{x}^{(\ell)} \coloneqq \vec{\rho}(\mat{A}^{(\ell - 1)}\vec{x}^{(\ell - 1)}), & \ell = 1, \ldots, L, \\
        x^{(L)} \coloneqq {\vec{a}^{(L)}}^\T\vec{x}^{(L)},
    \end{cases}
    \label{eq:nn-arch-no-bias-no-skip-BK}
\end{equation}
where $\vec{\rho}$ denotes applying $\rho$ component-wise, $\mat{A}^{(0)} \in \R^{K^{(1)} \times d}$, $\mat{A}^{(\ell)} \in \R^{K^{(\ell+1)} \times K^{(\ell)}}$, $\ell = 1, \ldots, L - 1$, and $\vec{a}^{(L)} \in \R^{K^{(L)}}$. Note that \cref{eq:nn-arch-no-bias-no-skip-BK} is almost the same as the architecture in our framework if we drop biases and skip connections (see \cref{eq:nn-arch-no-bias-no-skip} in \cref{rem:standard-arch}).
These works then consider path-norm regularization of the form
\begin{equation}
    \sum_{k_L=1}^{K^{(L)}}\sum_{k_{L-1}=1}^{K^{(L-1)}} \cdots \sum_{k_1 = 1}^{K^{(1)}} \sum_{k_0=1}^d \abs*{a_{k_0, k_1}} \abs*{a_{k_1, k_2}} \cdots \abs*{a_{k_{L-1}, k_{L}}} \abs*{a_{k_{L}}},
    \label{eq:path-norm}
\end{equation}
where $a_{k_{\ell}, k_{\ell+1}}$ denotes the $(k_{\ell}, k_{\ell+1})$th entry in $\mat{A}^{(\ell)}$ and $a_{k_{L}}$ denotes the $k_{L}$th entry in $\vec{a}^{(L)}$. 

Consider regularizing a deep ReLU network (with no biases or skip connections) from our framework with the following regularizer\footnote{Where we drop the term in \cref{eq:skip-regularization} as discussed in \cref{rem:drop-skip-regularization}.}, which arises with a particular choice of $\curly{\psi_\ell}_{\ell=0}^L$ in \cref{rem:monotone-functions-regularizer},
\begin{equation}
    \prod_{\ell = 1}^L \sum_{k=1}^{K^{(\ell)}} \norm*{\vec{v}_k^{(\ell)}}_1 \norm*{\vec{w}_k^{(\ell)}}_2.
    \label{eq:product-of-paths}
\end{equation}
We have that \cref{eq:product-of-paths} is an upper bound on something that looks very similar to the path-norm in \cref{eq:path-norm}. Indeed, first notice that if we write the deep ReLU network from our framework in the form in \cref{eq:nn-arch-no-bias-no-skip}, we have
\begin{equation}
    \abs*{a_{k_{\ell}, k_{\ell+1}}} = \abs*{{\vec{v}_k^{(\ell)}}^\T{\vec{w}_k^{(\ell+1)}}} \leq \norm*{\vec{v}_k^{(\ell)}}_2 \norm*{\vec{w}_k^{(\ell+1)}}_2 \leq \norm*{\vec{v}_k^{(\ell)}}_1\norm*{\vec{w}_k^{(\ell+1)}}_2,
    \label{eq:upper-bound-path-norm-computation}
\end{equation}
where $a_{k_{\ell}, k_{\ell+1}}$ denotes the $(k_{\ell}, k_{\ell+1})$th entry in $\mat{A}^{(\ell)}$ as defined in \cref{rem:standard-arch}. Therefore,
\begin{align*}
    \prod_{\ell = 1}^L \sum_{k=1}^{K^{(\ell)}} \norm*{\vec{v}_k^{(\ell)}}_1 \norm*{\vec{w}_k^{(\ell)}}_2
    &= \sum_{k_L=1}^{K^{(L)}} \cdots \sum_{k_1=1}^{K^{(1)}} \norm*{\vec{w}_{k_1}^{(1)}}_2 \norm*{\vec{v}_{k_1}^{(1)}}_1 \norm*{\vec{w}_{k_2}^{(2)}}_2 \norm*{\vec{v}_{k_2}^{(2)}}_1 \cdots \norm*{\vec{w}_{k_L}^{(L)}}_2 \norm*{\vec{v}_{k_L}^{(L)}}_1 \\
    &\geq \sum_{k_L=1}^{K^{(L)}} \cdots \sum_{k_1=1}^{K^{(1)}} \norm*{\vec{w}_{k_1}^{(1)}}_2 \, \abs{a_{k_1, k_2}} \cdots \abs{a_{k_{L-1},k_L}} \, \norm*{\vec{v}_{k_L}^{(L)}}_1,
\end{align*}
where the last line holds from \cref{eq:upper-bound-path-norm-computation}. We see that the last line in the above display is the same as the path-norm in \cref{eq:path-norm}, apart from how it treats weights in the first and last layers. We also remark that the work in~\cite{statistical-deep} show that the path-norm in \cref{eq:path-norm} controls the Rademacher and Gaussian complexity of deep ReLU networks.

\section{Conclusion} \label{sec:conclusion}
In this paper we have proven a representer theorem for deep ReLU networks\footnote{As stated in the introduction, all our results are straightforward to generalize to any truncated power activation function.}. We have shown that deep ReLU networks with $L$-hidden layers, skip connections, and rank bounded weight matrices are solutions to a variational problem over compositions of functions in $\RBV^2$-spaces. This variational problem can be recast as a finite-dimensional neural network training problem with various choices of regularization. We have therefore derived several new, principled regularizers for deep ReLU networks. Moreover, these regularizers promote sparse solutions. We have shown that these new regularizers are related to the well-known weight decay and path-norm regularization schemes commonly used in the training of deep ReLU networks. The main followup question revolves around more understanding of the compositional space $\RBV_{\mathsf{deep}}^2(L)$. This entails first having further understanding of the $\RBV^2$-spaces. The function spaces studied in this paper are new and not classical and future work will be directed at understanding how these new spaces are related to classical function spaces studied in functional analysis.

\appendix

\section{Topological Properties of \texorpdfstring{$\RBV^2(\R^d)$}{RBV2(Rd)}} \label{app:RBV-top-props}
In this section we will prove \cref{thm:RBV-top-props}. We will rely on many results developed in~\cite{ridge-splines}. While the definition of $\RBV^2(\R^d)$ given in \cref{eq:RBV} is convenient from an intuitive perspective, it does not lend itself to analysis due to $\RTV^2(\dummy)$ being a seminorm with null space $\mathcal{P}_1(\R^d)$, the space of polynomials of degree at most $1$, i.e., affine functions in $\R^d$. Thus, we use the result of~\cite[Theorem~22]{ridge-splines} to characterize $\RBV^2(\R^d)$ as a Banach space. In particular,~\cite[Theorem~22]{ridge-splines} considers an arbitrary \emph{biorthogonal system} of $\mathcal{P}_1(\R^d)$ in order to equip $\RBV^2(\R^d)$ with a \emph{bona fide} norm.

\begin{definition} \label{defn:biorthogonal-system}
    Let $\N$ be a finite-dimensional space with $N_0 \coloneqq \dim \N$. The pair $(\vec{\phi}, \vec{p}) = \curly{(\phi_n, p_n)}_{n=0}^{N_0 - 1}$ is called a \emph{biorthogonal system} for $\mathcal{N}$ if $\vec{p} = \curly{p_n}_{n=0}^{N_0 - 1}$ is a basis of $\N$ and the ``boundary'' functionals $\vec{\phi} = \curly{\phi_n}_{n=0}^{N_0 - 1}$ with $\phi_n \in \N'$ (the continuous dual of $\N$) satisfy the biorthogonality condition $\ang{\phi_k, p_n} = \delta[k - n]$, $k, n = 0, \ldots, N_0 - 1$, where $\delta[\dummy]$ is the Kronecker impulse.
\end{definition}

\begin{proposition}[{see~\cite[Theorem~22, Item~3]{ridge-splines}}] \label{prop:RBV-Banach-abstract}
    Let $(\vec{\phi}, \vec{p})$ be a biorthogonal system for $\mathcal{P}_1(\R^d)$. Then, $\RBV^2(\R^d)$ equipped with the norm
    \[
        \norm{f}_{\RBV^2(\R^d)} = \RTV^2(f) + \norm{\vec{\phi}(f)}_1,
    \]
    where $\vec{\phi}(f) = (\ang{\phi_0, f}, \ldots, \ang{\phi_d, f}) \in \R^{d+1}$, is a Banach space.
\end{proposition}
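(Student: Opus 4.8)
The plan is to verify that the proposed expression defines a genuine norm and then to establish completeness by exhibiting $\RBV^2(\R^d)$ as a Banach-space isomorph of a product of two complete spaces, following the ``native space'' construction underlying \cite[Theorem~22]{ridge-splines}.

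First I would check that $\norm{\dummy}_{\RBV^2(\R^d)}$ is a norm. Positive homogeneity and the triangle inequality are inherited termwise from the seminorm $\RTV^2(\dummy)$ and the $\ell^1$-norm $\norm{\dummy}_1$, so only definiteness requires argument. Suppose $\norm{f}_{\RBV^2(\R^d)} = 0$. Then $\RTV^2(f) = 0$, and since the null space of $\RTV^2(\dummy)$ is exactly $\mathcal{P}_1(\R^d)$, we may write $f = \sum_{n=0}^{d} c_n p_n$. Since also $\norm{\vec{\phi}(f)}_1 = 0$, we have $\ang{\phi_k, f} = 0$ for every $k$; the biorthogonality condition $\ang{\phi_k, p_n} = \delta[k-n]$ from \cref{defn:biorthogonal-system} then forces $c_k = \ang{\phi_k, f} = 0$ for all $k$, whence $f = 0$.

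For completeness, write $\L \coloneqq c_d \partial_t^2 \ramp^{d-1} \RadonOp$, so that $\RTV^2(f) = \norm{\L f}_{\M(\cyl)}$ and $\RBV^2(\R^d) = \curly{f \in L^{\infty,1}(\R^d) \st \L f \in \M(\cyl)}$, with the null space of $\L$ on this space equal to $\mathcal{P}_1(\R^d)$. The key step is to construct a \emph{stable right inverse} $\L_{\vec{\phi}}^{-1}$ of $\L$ — a bounded linear operator from the range of $\L$ (a closed subspace of $\M(\cyl)$ cut out by the symmetry and moment range conditions of the ramp-filtered Radon transform) into $\RBV^2(\R^d)$ — satisfying $\L \L_{\vec{\phi}}^{-1} = \mathrm{Id}$ together with the boundary conditions $\ang{\phi_n, \L_{\vec{\phi}}^{-1} w} = 0$ for all $n$. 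Granting such an operator, every $f \in \RBV^2(\R^d)$ admits the unique decomposition $f = \L_{\vec{\phi}}^{-1}(\L f) + \sum_{n=0}^{d} \ang{\phi_n, f}\, p_n$, so that the map $f \mapsto (\L f, \vec{\phi}(f)) \in \M(\cyl) \times \R^{d+1}$ is a bounded linear bijection onto $\mathrm{range}(\L) \times \R^{d+1}$ with bounded inverse $(w, \vec{a}) \mapsto \L_{\vec{\phi}}^{-1} w + \sum_n a_n p_n$, carrying $\norm{\dummy}_{\RBV^2(\R^d)}$ exactly to the product norm $\norm{w}_{\M(\cyl)} + \norm{\vec{a}}_1$.

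Completeness then transfers: $\M(\cyl)$ is a Banach space and $\R^{d+1}$ is finite-dimensional, so their product is complete; provided $\mathrm{range}(\L)$ is a closed subspace of $\M(\cyl)$, it too is complete, and hence so is $\RBV^2(\R^d)$ via the isomorphism. The main obstacle is the construction of the stable right inverse $\L_{\vec{\phi}}^{-1}$ and the verification that $\mathrm{range}(\L)$ is closed; both hinge on the fine mapping properties of the ramp-filtered Radon transform and its inversion formula — in particular on identifying the range conditions that make $\L$ a bounded surjection onto a closed subspace, and on checking that the Green's-function-type inverse respects the growth condition defining $L^{\infty,1}(\R^d)$ as well as the boundary functionals $\vec{\phi}$. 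These are precisely the technical computations carried out in \cite{ridge-splines}, which I would invoke rather than reproduce.
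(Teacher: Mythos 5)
Your proposal is correct and follows essentially the same route as the cited source and the machinery the paper itself recapitulates around \cref{prop:direct-sum-inverse}: definiteness of the norm via biorthogonality, and completeness via the direct-sum decomposition $f = \ROp^{-1}_{\vec{\phi}}\curly{\ROp f} + \sum_n \ang{\phi_n, f} p_n$ induced by the stable right inverse, which realizes $\RBV^2(\R^d)$ isometrically as $\M(\P^d) \times \R^{d+1}$ (the range of $\ROp$ being the closed subspace of even measures). The paper does not reprove this result but imports it from \cite[Theorem~22]{ridge-splines}, and the technical points you defer --- construction of $\ROp^{-1}_{\vec{\phi}}$ and closedness of the range --- are exactly the ones handled there.
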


We can now prove \cref{item:RBV-Banach} of \cref{thm:RBV-top-props}.
\begin{proof}[Proof of \cref{thm:RBV-top-props}, \cref{item:RBV-Banach}]
    By \cref{prop:RBV-Banach-abstract} it suffices to find a biorthogonal system $(\vec{\phi}, \vec{p})$ of $\mathcal{P}_1(\R^d)$ so that for every $f \in \RBV^2(\R^d)$ we have
    \begin{equation}
        \norm{\vec{\phi}(f)}_1 = \abs{f(\vec{0})} + \sum_{k=1}^d \abs{f(\vec{e}_k) - f(\vec{0})}.
        \label{eq:null-space-norm}
    \end{equation}
    
    Put $p_0(\vec{x}) \coloneqq 1$ and $p_k(\vec{x}) \coloneqq x_k$, $k = 1, \ldots, d$. Clearly $\vec{p}$ is a basis for $\mathcal{P}_1(\R^d)$. Put $\phi_0 \coloneqq \delta$ and $\phi_k \coloneqq \delta(\dummy - \vec{e}_k) - \delta$, $k = 1, \ldots, d$, where $\delta$ denotes the Dirac impulse on $\R^d$ and $\vec{e}_k$ denotes the $k$th canonical basis vector of $\R^d$. Then, $(\vec{\phi}, \vec{p})$ is a biorthogonal system for $\mathcal{P}_1(\R^d)$. Indeed, we have
    $\ang{\phi_0, p_0} = 1$ and $\ang{\phi_k, p_k} = p_k(\vec{e}_k) - p_k(\vec{0}) = 1 - 0 = 1$, $k = 1, \ldots, d$. We also have
    \begin{align*}
        \ang{\phi_0, p_k} &= p_k(\vec{0}) = 0, \quad k = 1, \ldots, d, \\
        \ang{\phi_k, p_0} &= p_0(\vec{e}_k) - p_0(\vec{0}) = 1 - 1 = 0, \quad k = 1, \ldots, d, \\
        \ang{\phi_k, p_n} &= p_n(\vec{e}_k) - p_n(\vec{0}) = 0 + 0 = 0, \quad k, n = 1, \ldots, d, \quad k \neq n.
    \end{align*}
    A computation shows that \cref{eq:null-space-norm} holds with this choice of biorthogonal system.
\end{proof}

In order to prove \cref{item:weak*-continuous} of \cref{thm:RBV-top-props}, we must show that the Dirac impulse, $\delta(\dummy - \vec{x}_0)$, $\vec{x}_0 \in \R^d$, is weak$^*$ continuous on $\RBV^2(\R^d)$. The following proposition characterizes the weak$^*$ continuous linear functionals on a Banach space.
\begin{proposition}[{see~\cite[Theorem~IV.20, pg. 114]{math-phys}}] \label{prop:weak*-continuous-predual}
    Let $\X$ be a Banach space. The only weak$^*$ continuous linear functionals on $\X'$ (the continuous dual of $\X$) are elements of $\X$.
\end{proposition}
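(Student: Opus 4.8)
The plan is to prove the equivalence by identifying $\X$ with its canonical image in $\X''$ and showing that a linear functional $\Phi \colon \X' \to \R$ is weak$^*$ continuous if and only if $\Phi = \ang{\dummy, x}$ for some $x \in \X$. One inclusion is immediate from the definition of the weak$^*$ topology as the coarsest topology on $\X'$ making every evaluation map $x' \mapsto \ang{x', x}$ continuous: each $x \in \X$ therefore induces a weak$^*$ continuous functional. All the content lies in the reverse inclusion, which I would obtain by combining the topological definition of continuity at a point with a purely algebraic reduction.

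First I would exploit continuity of $\Phi$ at the origin. Since $\Phi$ is weak$^*$ continuous with $\Phi(0) = 0$, there is a basic weak$^*$ neighborhood of $0$ on which $\abs{\Phi}$ is bounded by $1$; such neighborhoods have the explicit form $U = \curly{x' \in \X' \st \abs{\ang{x', x_i}} < \varepsilon, \ i = 1, \ldots, n}$ for a finite collection $x_1, \ldots, x_n \in \X$ and some $\varepsilon > 0$. Writing $\ell_i(\dummy) \coloneqq \ang{\dummy, x_i}$ for the associated evaluation functionals, I would then argue that $\Phi$ vanishes on $\bigcap_{i=1}^n \ker \ell_i$: if $x'$ lies in this intersection, then $\ell_i(t x') = 0 < \varepsilon$ for every scalar $t$, so $t x' \in U$ and hence $\abs{t}\,\abs{\Phi(x')} = \abs{\Phi(t x')} < 1$ for all $t \in \R$, which forces $\Phi(x') = 0$.

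The main obstacle — and really the crux of the argument — is the elementary linear-algebra lemma that a linear functional $\Phi$ annihilating the common kernel $\bigcap_{i=1}^n \ker \ell_i$ of finitely many linear functionals $\ell_1, \ldots, \ell_n$ must lie in their span. I would prove this by considering the map $T \colon \X' \to \R^n$, $T(x') = (\ell_1(x'), \ldots, \ell_n(x'))$: since $\ker T = \bigcap_i \ker \ell_i \subseteq \ker \Phi$, the functional $\Phi$ factors through $T$, yielding a linear $h \colon \R^n \to \R$ with $\Phi = h \circ T$, and reading off the coordinates of $h$ gives scalars $c_1, \ldots, c_n$ with $\Phi = \sum_{i=1}^n c_i \ell_i$. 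Finally, linearity of the duality pairing collapses this into $\Phi = \ang{\dummy, \sum_{i=1}^n c_i x_i}$, so that $\Phi = \ang{\dummy, x}$ with $x \coloneqq \sum_{i=1}^n c_i x_i \in \X$, which completes the identification and hence the proof.
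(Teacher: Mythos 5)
Your argument is correct, and it is essentially the canonical proof of this classical fact (Reed--Simon, Theorem IV.20, or Rudin's \emph{Functional Analysis}, Theorem 3.10): the paper itself states this proposition as a citation and gives no proof, so there is nothing to diverge from. The chain you describe --- continuity at $0$ on a basic weak$^*$ neighborhood determined by $x_1,\ldots,x_n$, the scaling argument showing $\Phi$ annihilates $\bigcap_i \ker \ell_i$, and the finite-dimensional factorization lemma placing $\Phi$ in the span of the $\ell_i$ --- is exactly the standard route; the only step you gloss over is that $h$ is first defined on the image of $T$ and must be extended linearly to all of $\R^n$, which is immediate in finite dimensions.
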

Therefore, we must show that the Dirac impulse is contained in the predual of $\RBV^2(\R^d)$. Before we can prove this, we require an important result from~\cite{ridge-splines}. Recall from \cref{eq:RTV} that
\[
    \RTV^2(f) = c_d \norm*{\partial_t^2 \ramp^{d-1} \RadonOp f}_{\M(\cyl)}.
\]
Put $\ROp \coloneqq c_d \, \partial_t^2 \ramp^{d-1} \RadonOp$. As discussed in~\cite{ridge-splines}, for every $f \in \RBV^2(\R^d)$, $u \coloneqq \ROp f \in \M(\cyl)$ is always even, i.e., $u(\vec{\gamma}, t) = u(-\vec{\gamma}, -t)$. This means we have
\[
    \RTV^2(f) = \norm{\ROp f}_{\M(\P^d)},
\]
where $\P^d$ denotes the manifold of hyperplanes on $\R^d$. In particular, we can view $\M(\P^d)$ as the subspace of $\M(\cyl)$ with only even finite Radon measures. Indeed, this is due to the fact that every hyperplane takes the form $h_{(\vec{\gamma}, t)} \coloneqq \curly{\vec{x} \in \R^d \st \vec{\gamma}^\T\vec{x} = t}$ for some $(\vec{\gamma}, t) \in \cyl$ and $h_{(\vec{\gamma}, t)} = h_{(-\vec{\gamma}, -t)}$.

\begin{proposition}[{see~\cite[Lemma~21~and~Theorem~22]{ridge-splines}}] \label{prop:direct-sum-inverse}
    Let $(\vec{\phi}, \vec{p})$ be a biorthogonal system for $\mathcal{P}_1(\R^d)$. Then, every $f \in \RBV^2(\R^d)$ has the unique direct-sum decomposition
    \[
        f = \ROp^{-1}_\vec{\phi}\curly{u} + q,
        \label{eq:direct-sum-decomp}
    \]
    where $u = \ROp f \in \M(\P^d)$, $q = \sum_{k=0}^{d} \ang{\phi_k, f}p_k \in \mathcal{P}_1(\R^d)$, and 
    \begin{equation}
        \ROp^{-1}_\vec{\phi}: u \mapsto \int_{\cyl} g_\vec{\phi}(\dummy, \vec{z}) u(\vec{z}) \dd(\sigma \times \lambda)(\vec{z}),
        \label{eq:right-inverse}
    \end{equation}
    where $\sigma$ is the surface measure on $\Sph^{d-1}$ and $\lambda$ is the Lebesgue measure on $\R$ and
    \begin{equation}
        g_\vec{\phi}(\vec{x}, \vec{z}) =  r_\vec{z}(\vec{x}) - \sum_{k=0}^{d} p_k(\vec{x}) q_k(\vec{z}),
        \label{eq:kernel-of-inverse}
    \end{equation}
    where $r_\vec{z} = r_{(\vec{w}, b)} = \rho(\vec{w}^\T(\dummy) - b)$ and $q_k(\vec{z}) \coloneqq \ang{\phi_k, r_\vec{z}}$, where $\vec{z} = (\vec{w}, b) \in \cyl$ and $\rho$ denotes any Green's function of $\D^2$, the second derivative operator, e.g., $\rho = \max\curly{0, \dummy}$ (the ReLU) or $\rho = \abs{\dummy} / 2$.
\end{proposition}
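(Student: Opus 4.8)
The plan is to recognize $\ROp^{-1}_\vec{\phi}$ as a \emph{stabilized right inverse} of $\ROp$ and to show that the pair $(\ROp, \vec{\phi})$ separates points of $\RBV^2(\R^d)$; the decomposition then follows by checking that the two candidate expressions for $f$ have the same image under $\ROp$ and under $\vec{\phi}$. First I would establish the Green's function identity $\ROp\curly{r_\vec{z}} = \delta_\vec{z}$, where $\delta_\vec{z}$ denotes the even Dirac measure at $\vec{z} = (\vec{w}, b) \in \cyl$. This is the crux of why the explicit kernel works: the Radon transform of the ridge function $r_\vec{z}(\vec{x}) = \rho(\vec{w}^\T\vec{x} - b)$ concentrates in the angular variable at $\pm\vec{w}$ and reproduces $\rho$ in the offset, and since $\rho$ is a Green's function of $\D^2$, the operator $\partial_t^2$ collapses it to a Dirac in the offset while the ramp filter $\ramp^{d-1}$ and the constant $c_d$ supply the correct normalization.

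Next I would verify the two defining properties of $\ROp^{-1}_\vec{\phi}$. For the right-inverse property, I would interchange $\ROp$ (acting in the spatial variable $\vec{x}$) with the integral over $\vec{z}$ in \cref{eq:right-inverse} and apply the Green's function identity together with the fact that $\ROp$ annihilates the affine correction $\sum_k p_k(\dummy) q_k(\vec{z})$; this yields $\ROp\ROp^{-1}_\vec{\phi}\curly{u} = u$ for every $u \in \M(\P^d)$. For the stabilization property, I would use that the correction term is built precisely so that $\ang{\phi_k, g_\vec{\phi}(\dummy, \vec{z})} = q_k(\vec{z}) - \sum_n \ang{\phi_k, p_n} q_n(\vec{z}) = q_k(\vec{z}) - q_k(\vec{z}) = 0$ by biorthogonality, whence $\vec{\phi}(\ROp^{-1}_\vec{\phi}\curly{u}) = \vec{0}$ for all $u$. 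Along the way I would confirm that $\ROp^{-1}_\vec{\phi}\curly{u}$ actually lands in $\RBV^2(\R^d)$, i.e., that it has at most linear growth (so it lies in $L^{\infty,1}(\R^d)$) and finite $\RTV^2$-seminorm.

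With these properties in hand, the decomposition follows formally. Given $f \in \RBV^2(\R^d)$, set $u = \ROp f$ and $q = \sum_{k=0}^d \ang{\phi_k, f} p_k$, and let $h \coloneqq \ROp^{-1}_\vec{\phi}\curly{u} + q$. Applying $\ROp$ gives $\ROp h = u + 0 = \ROp f$ (since $q \in \mathcal{P}_1(\R^d)$ lies in the null space of $\ROp$), while applying $\vec{\phi}$ gives $\vec{\phi}(h) = \vec{0} + \vec{\phi}(q) = \vec{\phi}(f)$, the last equality being biorthogonality. Point separation finishes the argument: if $\ROp g = 0$ then $g \in \mathcal{P}_1(\R^d)$, and if additionally $\vec{\phi}(g) = \vec{0}$ then $g = 0$ because $\vec{\phi}$ restricts to a coordinate isomorphism on $\mathcal{P}_1(\R^d)$; applying this to $g = f - h$ gives $f = h$. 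Uniqueness and directness of the sum come from the same separation: a function lying in both $\ROp^{-1}_\vec{\phi}(\M(\P^d))$ and $\mathcal{P}_1(\R^d)$ is affine, hence killed by $\ROp$, hence equal to $\ROp^{-1}_\vec{\phi}\curly{0} = 0$.

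The main obstacle is the rigorous functional-analytic justification underlying the formal manipulations above. The operator $\ROp$ involves the nonlocal ramp filter $\ramp^{d-1}$ and distributional derivatives, so the Green's function identity and the interchange of $\ROp$ with the integral against a general finite measure $u \in \M(\P^d)$ must be made precise in the sense of tempered distributions, with care taken over the even-symmetrization that identifies $\M(\P^d)$ as the even subspace of $\M(\cyl)$. Equally delicate is showing that the kernel $\vec{z} \mapsto g_\vec{\phi}(\vec{x}, \vec{z})$ is regular enough for the duality pairing in \cref{eq:right-inverse} to be well-defined against $u$ and to produce a genuine element of $\RBV^2(\R^d)$; the affine correction in \cref{eq:kernel-of-inverse} is exactly what tames the growth of $r_\vec{z}$ so that this holds. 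These are precisely the technical points worked out in \cite[Lemma~21~and~Theorem~22]{ridge-splines}.
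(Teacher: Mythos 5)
This proposition is imported verbatim from \cite[Lemma~21~and~Theorem~22]{ridge-splines}; the paper offers no proof of its own beyond that citation. Your outline---the even-symmetrized Green's-function identity $\ROp\curly{r_\vec{z}} = \delta_\vec{z}$, the right-inverse and boundary-annihilation ($\vec{\phi}\circ\ROp^{-1}_\vec{\phi} = \vec{0}$) properties of the stabilized inverse, and the conclusion via separation of points by the pair $(\ROp,\vec{\phi})$ together with the fact that the null space of $\ROp$ on $\RBV^2(\R^d)$ is $\mathcal{P}_1(\R^d)$---is exactly the strategy of the cited proof, and you correctly identify where the real technical work lies (distributional interpretation of $\ROp$ on non-integrable ridge functions, interchange of $\ROp$ with integration against a finite measure, and the growth control furnished by the affine correction in the kernel).
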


The operator $\ROp^{-1}_\vec{\phi}$ defined in \cref{eq:right-inverse} has several useful properties (see~\cite[Theorem~22, Items~1~and~2]{ridge-splines}). In particular, it is a stable (i.e., bounded) right-inverse of $\ROp$, and when restricted to
\[
    \RBV^2_\vec{\phi}(\R^d) \coloneqq \curly{f \in \RBV^2(\R^d) \st \vec{\phi}(f) = \vec{0}},
\]
it is the \emph{bona fide} inverse of $\ROp$. The space $\RBV^2_\vec{\phi}(\R^d)$ is also a concrete transcription of the abstract quotient $\RBV^2(\R^d) / \mathcal{P}_1(\R^d)$.
We have that $\ROp: \RBV^2_\vec{\phi}(\R^d) \to \M(\P^d)$ is an isometric isomorphism with inverse given by $\ROp^{-1}_\vec{\phi}$. Additionally we have from \cref{prop:direct-sum-inverse} that $\RBV^2(\R^d) \cong \RBV^2_\vec{\phi}(\R^d) \oplus \mathcal{P}_1(\R^d)$, where $\RBV^2_\vec{\phi}(\R^d)$ is a Banach space when equipped with the norm $f \mapsto \norm{\ROp f}_{\M(\P^d)}$ and $\mathcal{P}_1(\R^d)$ is a Banach space when equipped with the norm $f \mapsto \norm{\vec{\phi}(f)}_1$. These properties will be important in proving \cref{item:weak*-continuous} of \cref{thm:RBV-top-props}.

\begin{proof}[Proof of \cref{thm:RBV-top-props}, \cref{item:weak*-continuous}]
    Let $(\vec{\phi}, \vec{p})$ be the biorthogonal system constructed in the proof of \cref{thm:RBV-top-props}, \cref{item:RBV-Banach}. Since $\RBV^2(\R^d) \cong \RBV^2_\vec{\phi}(\R^d) \oplus \mathcal{P}_1(\R^d)$,
    showing that $\delta(\dummy - \vec{x}_0)$, $\vec{x}_0 \in \R^d$, is weak$^*$ continuous on $\RBV^2(\R^d)$ is equivalent to showing that it is weak$^*$ continuous on both $\RBV^2_\vec{\phi}(\R^d)$ and $\mathcal{P}_1(\R^d)$.
    
    Clearly $\delta(\dummy - \vec{x}_0)$, $\vec{x}_0 \in \R^d$, is continuous on $\mathcal{P}_1(\R^d)$ (since every element of $\mathcal{P}_1(\R^d)$ is a continuous function). Then, since $\mathcal{P}_1(\R^d)$ is finite-dimensional, the space of continuous linear functionals and weak$^*$ continuous linear functionals are the same. Thus, $\delta(\dummy - \vec{x}_0)$, $\vec{x}_0 \in \R^d$, is weak$^*$ continuous on $\mathcal{P}_1(\R^d)$.

    It remains to show that $\delta(\dummy - \vec{x}_0)$, $\vec{x}_0 \in \R^d$, is weak$^*$ continuous on $\RBV^2_\vec{\phi}(\R^d)$. Let $\X$ be the predual of $\RBV^2_\vec{\phi}(\R^d)$, i.e., $\X' = \RBV^2_\vec{\phi}(\R^d)$. We must show that $\delta(\dummy - \vec{x}_0) \in \X$ for all $\vec{x}_0 \in \R^d$. The Riesz--Markov--Kakutani representation theorem says that the predual of $\M(\P^d)$ is $C_0(\P^d)$. The following diagram shows how all these spaces are related.
    \[
        \begin{tikzcd}[row sep=3em,column sep=5em]
        \RBV^2_\vec{\phi}(\R^d) \arrow[yshift=0.5em]{r}{\ROp} & \M(\P^d) \arrow[yshift=-0.5em]{l}{\ROp^{-1}_\vec{\phi}} \\
        \X  \arrow[dashed]{u}{\text{dual}} \arrow[yshift=-0.5em, swap]{r}{\ROp^{-1*}_\vec{\phi}} & C_0(\P^d) \arrow[dashed, swap]{u}{\text{dual}} \arrow[yshift=0.5em, swap]{l}{\ROp^*_\vec{\phi}}
        \end{tikzcd}
    \]
    
    The above diagram shows that $\delta(\dummy - \vec{x}_0) \in \X$ if and only if $\ROp^{-1*}_\vec{\phi}\curly{\delta(\dummy - \vec{x}_0)} \in C_0(\P^d)$. From \cref{prop:direct-sum-inverse} we see that $\ROp^{-1*}_\vec{\phi}\curly{\delta(\dummy - \vec{x}_0)} = g_\vec{\phi}(\vec{x}_0, \dummy)$ defined in \cref{eq:kernel-of-inverse}. By choosing $\rho = \abs{\dummy} / 2$ in \cref{eq:kernel-of-inverse} we have
    \begin{align*}
        g_\vec{\phi}(\vec{x}_0, (\vec{w}, b))
        &= \frac{\abs*{\vec{w}^\T\vec{x}_0 - b}}{2} - \sum_{k=0}^d p_k(\vec{x}_0) \ang{\phi_k, \frac{\abs*{\vec{w}^\T(\dummy) - b}}{2}} \\
        &\overset{\mathclap{(*)}}{=} \frac{\abs*{\vec{w}^\T\vec{x}_0 - b}}{2} - \sq{\frac{\abs{-b}}{2} + \sum_{k=1}^d x_{0,k} \paren{\frac{\abs*{w_k - b}}{2} - \frac{\abs{-b}}{2}}} \\
        &= \frac{\abs*{\vec{w}^\T\vec{x}_0 - b}}{2} - \frac{\abs{b}}{2}\paren{1 - \sum_{k=1}^d x_{0, k}} - \sum_{k=1}^d x_{0,k} \frac{\abs*{w_k - b}}{2}, \numberthis \label{eq:g-phi}
    \end{align*}
    where $(*)$ follows by substituting in the biorthogonal system $(\vec{\phi}, \vec{p})$ constructed in the proof of \cref{thm:RBV-top-props}, \cref{item:RBV-Banach}. Clearly $g_\vec{\phi}(\vec{x}_0, \dummy)$ is continuous and $g_\vec{\phi}(\vec{x}_0, (\vec{w}, b)) =  g_\vec{\phi}(\vec{x}_0, (-\vec{w}, -b))$, so $g_\vec{\phi}(\vec{x}_0, \dummy)$ is an even function on $\cyl$ and therefore a continuous function on $\P^d$. It remains to check that $g_\vec{\phi}(\vec{x}_0, \dummy)$ is vanishing at infinity. Certainly this is true. Indeed, for sufficiently large $b$ we have
    \[
        g_\vec{\phi}(\vec{x}_0, (\vec{w}, b)) = \frac{-\vec{w}^\T\vec{x}_0 + b}{2} - \frac{b}{2}\paren{1 - \sum_{k=1}^d x_{0, k}} - \sum_{k=1}^d x_{0,k} \frac{-w_k + b}{2} = 0,
    \]
    and for sufficiently small $b$ we have
    \[
        g_\vec{\phi}(\vec{x}_0, (\vec{w}, b)) = \frac{\vec{w}^\T\vec{x}_0 - b}{2} - \frac{-b}{2}\paren{1 - \sum_{k=1}^d x_{0, k}} - \sum_{k=1}^d x_{0,k} \frac{w_k - b}{2} = 0.
    \]
    Therefore, $g_\vec{\phi}(\vec{x}_0, \dummy)$ is compactly supported on $\P^d$ and so $g_\vec{\phi}(\vec{x}_0, \dummy) \in C_0(\P^d)$. Thus, the Dirac impulse $\delta(\dummy - \vec{x}_0)$, $\vec{x}_0 \in \R^d$, is weak$^*$ continuous on $\RBV^2(\R^d)$.
\end{proof}

\section{Proof of \cref{thm:banach-rep-thm}} \label{app:banach-rep-thm}
In order to prove \cref{thm:banach-rep-thm}, we require that solutions to the variational problem in \cref{thm:banach-rep-thm} exist. We will use the following recent result regarding existence of solutions to variational problems over Banach spaces.
\begin{proposition}[{special~case~of~\cite[Theorem~2]{unifying-representer}}] \label{prop:existence}
    Let $(\X, \X')$ be a dual pair of Banach spaces and $\curly{\nu_n}_{n=1}^N \subset \X$ be a collection of linearly independent measurement functionals. Then, the solution set to
    \[
        \argmin_{f \in \X'} \: \norm{f}_{\X'} \quad\subj\quad \ang{\nu_n, f} = y_n, \: n = 1, \ldots, N,
    \]
    is nonempty, convex, and weak$^*$ compact, where $\ang{\dummy, \dummy}$ denotes the pairing of $\X'$ and its continuous dual, $\X''$\footnote{Note that $\nu_n \in \X$ implies $\nu_n \in \X''$ by the canonical embedding of a Banach space in its bidual.}.
\end{proposition}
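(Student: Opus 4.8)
The plan is to exhibit the solution set as the set of minimizers of a \emph{weak$^*$ lower semi-continuous, convex} functional over a \emph{weak$^*$ compact, convex} feasible region, so that existence follows from the Weierstrass extreme value theorem while convexity and compactness of the solution set come essentially for free. The single structural fact I would lean on throughout is that each $\nu_n$ lives in the \emph{predual} $\X$ rather than merely in $\X''$: by the very definition of the weak$^*$ topology $\sigma(\X', \X)$, the functionals $f \mapsto \ang{\nu_n, f}$ are then weak$^*$ continuous on $\X'$. This is precisely what will make the feasible set weak$^*$ closed.

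First I would establish feasibility, i.e.\ that $\mathcal{F} \coloneqq \curly{f \in \X' \st \ang{\nu_n, f} = y_n, \ n = 1, \ldots, N}$ is nonempty. Consider the sampling map $\Phi: \X' \to \R^N$, $f \mapsto (\ang{\nu_1, f}, \ldots, \ang{\nu_N, f})$. If its range were a proper subspace of $\R^N$, there would be a nonzero $\vec{c} \in \R^N$ with $\ang{\sum_n c_n \nu_n, f} = 0$ for every $f \in \X'$; since $\X$ embeds canonically and injectively into $\X'' = (\X')'$, this forces $\sum_n c_n \nu_n = \vec{0}$ in $\X$, contradicting linear independence of $\curly{\nu_n}$. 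Hence $\Phi$ is surjective and $\mathcal{F} \neq \emptyset$ for every choice of the $y_n$.

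Next I would pin down existence of a minimizer. Fix any $f_0 \in \mathcal{F}$ and set $m \coloneqq \inf_{f \in \mathcal{F}} \norm{f}_{\X'} \le \norm{f_0}_{\X'}$. The set $\mathcal{F}$ is an intersection of preimages of points under the weak$^*$ continuous functionals $\ang{\nu_n, \dummy}$, hence weak$^*$ closed; the ball $B \coloneqq \curly{f \in \X' \st \norm{f}_{\X'} \le \norm{f_0}_{\X'}}$ is weak$^*$ compact by Banach--Alaoglu. Thus $\mathcal{F} \cap B$ is nonempty and weak$^*$ compact. Because $\X'$ is a dual space, its norm $\norm{\dummy}_{\X'} = \sup_{\norm{x}_{\X} \le 1} \ang{x, \dummy}$ is a supremum of weak$^*$ continuous functions and therefore weak$^*$ lower semi-continuous, so the Weierstrass extreme value theorem yields some $f^\star \in \mathcal{F} \cap B$ attaining $m$. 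For the final two claims I would write the solution set as $\mathcal{F} \cap \curly{f \in \X' \st \norm{f}_{\X'} \le m}$: it is convex since both the norm and $\mathcal{F}$ are convex, and it is the intersection of a weak$^*$ closed set with a weak$^*$ compact ball, hence weak$^*$ compact.

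The main obstacle here is conceptual rather than computational: one must correctly identify which topology closes the argument, namely that the hypothesis $\nu_n \in \X$ (and not merely $\nu_n \in \X''$) is exactly what promotes the constraint functionals to weak$^*$ continuity and the feasible set to weak$^*$ closedness. With the wrong duality the feasible set need not be weak$^*$ closed, the intersection with a Banach--Alaoglu ball need not be compact, and this entire compactness scheme collapses; everything else is a routine assembly of standard facts.
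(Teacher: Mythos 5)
Your proof is correct, but note that the paper does not actually prove this proposition at all: it is imported wholesale as a special case of \cite[Theorem~2]{unifying-representer}, with the accompanying remark that only the existence part of that (more general) theorem is needed. What you have done is supply a self-contained proof of exactly that existence-plus-structure statement, and your argument is the standard one: surjectivity of the sampling map $f \mapsto (\ang{\nu_1,f},\ldots,\ang{\nu_N,f})$ from linear independence via the injectivity of the canonical embedding $\X \hookrightarrow \X''$; weak$^*$ closedness of the feasible affine set because each $\nu_n$ lies in the predual; Banach--Alaoglu for compactness of the ball $\norm{f}_{\X'} \le \norm{f_0}_{\X'}$; weak$^*$ lower semi-continuity of the dual norm as a supremum of weak$^*$ continuous functionals; and the generalized Weierstrass theorem. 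The identification of the solution set as $\mathcal{F} \cap \curly{f \st \norm{f}_{\X'} \le m}$ cleanly delivers convexity and weak$^*$ compactness. Each step checks out, and your emphasis on the hypothesis $\nu_n \in \X$ (rather than merely $\nu_n \in \X''$) as the linchpin is exactly right --- it is the same point the paper leans on when it verifies, via \cref{thm:RBV-top-props}, that the Dirac functionals lie in the predual before invoking this proposition. Indeed, this is the same compactness scheme the authors deploy explicitly in their proof of \cref{thm:deep-representer} (Banach--Alaoglu plus Weierstrass on the weak$^*$ topology), so your argument is fully consistent with, and effectively reconstructs, the machinery the paper outsources to the citation.
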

\begin{remark}
    The result of~\cite[Theorem~2]{unifying-representer} is more general than what is stated in \cref{prop:existence}, but we are only interested in the existence result in this paper.
\end{remark}

\begin{proof}[Proof of \cref{thm:banach-rep-thm}]
    By \cref{thm:RBV-top-props}, we have $\RBV^2(\R^d)$ is a Banach space and that the functionals $\nu_n \coloneqq \delta(\dummy - \vec{x}_n)$, $n = 1, \ldots, N$, are weak$^*$ continuous on $\RBV^2(\R^d)$ (and are therefore contained in the predual of $\RBV^2(\R^d)$). Moreover, this choice of $\curly{\nu_n}_{n=1}^N$ is clearly linearly independent\footnote{Assuming that $\vec{x}_n \neq \vec{x}_k$ for $n \neq k$.}. Therefore, the problem in \cref{eq:banach-rep-variational} satisfies the hypotheses of \cref{prop:existence} and so a solution to \cref{eq:banach-rep-variational} exists. Let $\tilde{s}$ be a (not necessarily unique) solution to \cref{eq:banach-rep-variational}. This solution must be a minimizer of
    \[
        \min_{f \in \RBV^2(\R^d)} \RTV^2(f) \quad\subj\quad 
        \begin{cases}
            f(\vec{x}_n) = y_n, &n = 1, \ldots, N, \\
            f(\vec{0}) = \tilde{s}(\vec{0}), \\
            f(\vec{e}_k) = \tilde{s}(\vec{e}_k), & k = 1, \ldots, d.
        \end{cases}
    \]
    
    By \cref{prop:ridge-spline-rep-thm}, there exists a solution to the above display that takes the form in \cref{eq:banach-rep-soln} with $K \leq N$ neurons, so we can always find a solution to the original problem in \cref{eq:banach-rep-variational} of the form in \cref{eq:banach-rep-soln}.
\end{proof}

\section{Proof of \cref{thm:vv-banach-rep-thm}} \label{app:vv-banach-rep-thm}
\begin{proof}
    By \cref{lemma:vv-top-props}, we have that $\RBV^2(\R^d; \R^D)$ is a Banach space and that the point evaluation operator is component-wise weak$^*$ continuous on $\RBV^2(\R^d; \R^d)$. Therefore, the functionals 
    \[
        \ang{\nu_{n, m}, f} = f_m(\vec{x}_n), \: n = 1, \ldots, N, \: m = 1, \ldots, D,
    \]
    where $f = (f_1, \ldots, f_D) \in \RBV^2(\R^d; \R^D)$ and $\ang{\dummy, \dummy}$ denotes the pairing of $\RBV^2(\R^d; \R^D)$ and its continuous dual, are contained in the predual of $\RBV^2(\R^d; \R^D)$. Moreover, these functionals are linearly independent\footnote{Assuming that $\vec{x}_n \neq \vec{x}_k$ for $n \neq k$.}. Therefore, the problem in \cref{eq:vv-banach-rep-variational} satisfies the hypotheses of \cref{prop:existence} and so a solution to \cref{eq:vv-banach-rep-variational} exists. Next, note that we
    can rewrite the problem in \cref{eq:vv-banach-rep-variational} as
    \[
        \min_{\substack{f = (f_1, \ldots, f_D) \\ f_m \in \RBV^2(\R^d) \\ m = 1, \ldots, D}} \: \sum_{m=1}^D \norm{f_m}_{\RBV^2(\R^d)} \quad\subj\quad f_m(\vec{x}_n) = y_{n, m}, \: \begin{cases}
        n = 1, \ldots, N \\
        m = 1, \ldots, D,
        \end{cases}
    \]
    where $\vec{y}_n = (y_{n, 1}, \ldots, y_{n, D}) \in \R^D$. Let $\tilde{s} = (\tilde{s}_1, \ldots, \tilde{s}_D)$ be a (not necessarily unique) solution to \cref{eq:vv-banach-rep-variational}. From the above display we see that this solution must satisfy
    \begin{equation}
        \tilde{s}_m \in \argmin_{f \in \RBV^2(\R^d)} \norm{f}_{\RBV^2(\R^d)} \quad\subj\quad f(\vec{x}_n) = y_{n, m}, \: n=1, \ldots, N,
        \label{eq:sm-minimizer}
    \end{equation}
    for $m = 1, \ldots, D$. To see this, note that if the above display did not hold, it would contradict the optimality of $\tilde{s}$. By \cref{thm:banach-rep-thm}, there exists a solution to the above display that takes the form in \cref{eq:banach-rep-soln} with $K_m \leq N$ neurons. By combining these solutions into a single vector-valued function with potential combining of neurons\footnote{This would happen in the event that $\tilde{s}_m$ and $\tilde{s}_\ell$, $m \neq \ell$, shared a common neuron.} we see that there exists a solution to the original problem in \cref{eq:vv-banach-rep-variational} that takes the form in \cref{eq:vv-banach-rep-soln} with $K \leq K_1 + \cdots + K_D \leq ND$ neurons. If no neurons combine, each $\vec{v}_k$ is $1$-sparse.
\end{proof}
\begin{remark}\label{rem:no-sharing}
    One could also write a solution of \cref{eq:vv-banach-rep-variational} such that each output is completely independent of any other output, i.e., the outputs are completely decoupled. This corresponds to fitting the data with $D$ separate single-hidden layer ReLU networks. This follows from the fact that $s_m$ is a minimizer to the problem in \cref{eq:sm-minimizer}. This corresponds to the representation in \cref{eq:vv-banach-rep-soln} having each $\vec{v}_k$ being $1$-sparse.
\end{remark}

\section{Proof of \cref{lemma:Lipschitz-bound}} \label{app:Lipschitz-bound}
Before proving \cref{lemma:Lipschitz-bound}, we will first bound the Lipschitz constant of functions in $\RBV^2(\R^d)$. To do this, we will rely on \cref{prop:direct-sum-inverse} with the biorthogonal system constructed in the proof of \cref{thm:RBV-top-props} given in \cref{app:RBV-top-props}. In particular, \cref{prop:direct-sum-inverse} provides the direct-sum decomposition of $f \in \RBV^2(\R^d)$ by
\begin{equation}
    f(\vec{x}) = \int_\cyl g_\vec{\phi}(\vec{x}, (\vec{w}, b)) u(\vec{w}, b) \dd \sigma(\vec{w}) \dd b + \vec{c}^\T\vec{x} + c_0,
    \label{eq:integral-representation}
\end{equation}
with $g_\vec{\phi}$ as in \cref{eq:g-phi}. It can easily be checked that this decomposition has the property that
\begin{equation}
    \norm{f}_{\RBV^2(\R^d)} = \norm{u}_{\M(\cyl)} + \norm{\vec{c}}_1 + \abs{c_0},
    \label{eq:direct-sum-norm}
\end{equation}
and we refer the reader to~\cite[Theorem~22, Item~3]{ridge-splines} for more details.
\begin{lemma} \label{lemma:Lipschitz-bound-scalar}
    Let $f \in \RBV^2(\R^d)$. Then, $f$ is Lipschitz continuous and satisfies the Lipschitz bound
    \[
        \abs{f(\vec{x}) - f(\vec{y})} \leq  \norm{f}_{\RBV^2(\R^d)} \, \norm{\vec{x} - \vec{y}}_1.
    \]
\end{lemma}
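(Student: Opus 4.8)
The plan is to work directly from the integral representation \cref{eq:integral-representation} together with the norm identity \cref{eq:direct-sum-norm}. Writing $f(\vec{x}) = \int_\cyl g_\vec{\phi}(\vec{x}, (\vec{w}, b))\, u(\vec{w}, b)\dd\sigma(\vec{w})\dd b + \vec{c}^\T\vec{x} + c_0$, the difference $f(\vec{x}) - f(\vec{y})$ splits into a kernel-integral part and the affine part $\vec{c}^\T(\vec{x} - \vec{y})$. The affine part is immediate, since $\abs{\vec{c}^\T(\vec{x}-\vec{y})} \leq \norm{\vec{c}}_\infty\norm{\vec{x}-\vec{y}}_1 \leq \norm{\vec{c}}_1\norm{\vec{x}-\vec{y}}_1$. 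Everything then reduces to a uniform (in $(\vec{w}, b)$) Lipschitz estimate on the kernel $g_\vec{\phi}(\dummy, (\vec{w}, b))$ with respect to $\norm{\dummy}_1$.

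To establish that estimate I would use the explicit formula \cref{eq:g-phi} and difference it in the first argument. The term $\frac{1}{2}\abs{\vec{w}^\T\vec{x}_0 - b}$ is $\frac12\norm{\vec{w}}_\infty$-Lipschitz in $\vec{x}_0$ with respect to $\norm{\dummy}_1$, since $t \mapsto \abs{t}/2$ is $\tfrac12$-Lipschitz and $\vec{x}_0 \mapsto \vec{w}^\T\vec{x}_0$ is $\norm{\vec{w}}_\infty$-Lipschitz in the $\ell^1$-norm. The remaining two terms of \cref{eq:g-phi} are both linear (indeed affine) in $\vec{x}_0$; differencing and collecting them yields the single expression $\sum_{k=1}^d (x_k - y_k)\frac{\abs{b} - \abs{w_k - b}}{2}$.

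The crux — and the one place where care is genuinely needed — is that each of the two affine terms in \cref{eq:g-phi} separately carries a factor growing like $\abs{b}$, so a naive term-by-term bound diverges as $\abs{b}\to\infty$. The resolution is that these divergences cancel once the terms are combined: by the reverse triangle inequality $\abs{\,\abs{b} - \abs{w_k - b}\,} \leq \abs{w_k}$, so the combined contribution is bounded by $\frac12\sum_k \abs{x_k - y_k}\abs{w_k} \leq \frac12\norm{\vec{w}}_\infty\norm{\vec{x}-\vec{y}}_1$, uniformly in $b$. Adding this to the $\frac12\norm{\vec{w}}_\infty\norm{\vec{x}-\vec{y}}_1$ bound from the first term, and using that $\vec{w}\in\Sph^{d-1}$ forces $\norm{\vec{w}}_\infty \leq \norm{\vec{w}}_2 = 1$, gives $\abs{g_\vec{\phi}(\vec{x}, (\vec{w}, b)) - g_\vec{\phi}(\vec{y}, (\vec{w}, b))} \leq \norm{\vec{x}-\vec{y}}_1$ for every $(\vec{w}, b) \in \cyl$.

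Finally I would assemble the pieces. Bounding the kernel integral against the total-variation measure $\abs{u}$ gives $\int_\cyl \abs{g_\vec{\phi}(\vec{x}, \vec{z}) - g_\vec{\phi}(\vec{y}, \vec{z})}\dd\abs{u}(\vec{z}) \leq \norm{\vec{x}-\vec{y}}_1\,\norm{u}_{\M(\cyl)}$, and combining this with the affine estimate yields $\abs{f(\vec{x}) - f(\vec{y})} \leq \paren{\norm{u}_{\M(\cyl)} + \norm{\vec{c}}_1 + \abs{c_0}}\norm{\vec{x}-\vec{y}}_1 = \norm{f}_{\RBV^2(\R^d)}\norm{\vec{x}-\vec{y}}_1$, where the final equality is \cref{eq:direct-sum-norm}. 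This simultaneously proves Lipschitz continuity of $f$ and the stated bound. The main obstacle is precisely the apparent blow-up in $\abs{b}$ described above; recognizing the cancellation and invoking the reverse triangle inequality is the key move, after which the argument is routine.
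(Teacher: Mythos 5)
Your proposal is correct and follows essentially the same route as the paper's proof: bound the $\ell^1$-Lipschitz constant of the kernel $g_\vec{\phi}(\dummy,(\vec{w},b))$ uniformly in $(\vec{w},b)$ via the reverse triangle inequality (including the key cancellation of the $\abs{b}$-growing affine terms), then integrate against $\abs{u}$ and add the affine part, concluding with \cref{eq:direct-sum-norm}. No gaps.
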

\begin{proof}
    We will first bound the Lipschitz constant of $g_\vec{\phi}(\dummy, \vec{z})$ defined in \cref{eq:g-phi}, where $\vec{z} = (\vec{w}, b) \in \cyl$. For any $\vec{x}, \vec{y} \in \R^d$,
    \begin{align*}
        \abs{g_\vec{\phi}(\vec{x}, \vec{z}) - g_\vec{\phi}(\vec{y}, \vec{z})}
        &= \abs*[\Bigg]{\frac{\abs{\vec{w}^\T\vec{x} - b}}{2} - \frac{\abs{\vec{w}^\T\vec{y} - b}}{2} \\
        &\qquad - \frac{\abs{b}}{2} \sq{\paren{1 - \sum_{k=1}^d x_k} - \paren{1 - \sum_{k=1}^d y_k}}
        - \sum_{k=1}^d (x_k - y_k) \frac{\abs{w_k - b}}{2}} \\
        &\leq \frac{\abs{\,\abs{\vec{w}^\T\vec{x} - b} - \abs{\vec{w}^\T\vec{y} - b}\,}}{2} \\
        &\qquad + \abs{\sum_{k=1}^d (x_k - y_k) \frac{\abs{b}}{2} - \sum_{k=1}^d (x_k - y_k) \frac{\abs{w_k - b}}{2}} \\
        &\leq \frac{\abs{\,\abs{\vec{w}^\T\vec{x} - b} - \abs{\vec{w}^\T\vec{y} - b}\,}}{2}
        + \sum_{k=1}^d \abs{x_k - y_k} \frac{\abs{\,\abs{b} - \abs{w_k - b}\,}}{2} \\
        &\overset{\mathclap{(*)}}{\leq} \frac{\abs{\vec{w}^\T\vec{x} - \vec{w}^\T\vec{y}}}{2} + \sum_{k=1}^d \abs{x_k - y_k} \frac{\abs{w_k}}{2} \\
        &\overset{\mathclap{(\mathsection)}}{\leq} \frac{\norm{\vec{w}}_\infty \norm{\vec{x} - \vec{y}}_1 + \norm{\vec{w}}_\infty\norm{\vec{x} - \vec{y}}_1}{2} \\
        &\overset{\mathclap{(\dagger)}}{\leq} \norm{\vec{x} - \vec{y}}_1
    \end{align*}
    where $(*)$ holds from the reverse triangle inequality, $(\mathsection)$ holds from H\"older's inequality, and $(\dagger)$ holds from the fact that $\norm{\dummy}_\infty \leq \norm{\dummy}_2$ in finite-dimensional spaces combined with $\norm{\vec{w}}_2 = 1$.
    
    Next, from \cref{eq:integral-representation} we have for any $\vec{x}, \vec{y} \in \R^d$,
    \begin{align*}
        \abs{f(\vec{x}) - f(\vec{y})}
        &\leq \int_\cyl \abs{g(\vec{x}, (\vec{w}, b)) - g(\vec{y}, (\vec{w}, b))} \abs{u(\vec{w}, b)} \dd\sigma(\vec{w}) \dd b + \abs*{\vec{c}^\T(\vec{x} - \vec{y})} \\
        &\leq \int_\cyl \norm{\vec{x} - \vec{y}}_1 \abs{u(\vec{w}, b)} \dd\sigma(\vec{w}) \dd b + \norm{\vec{c}}_\infty \norm{\vec{x} - \vec{y}}_1 \\
        &\leq \norm{u}_{\M(\cyl)}\norm{\vec{x} - \vec{y}}_1 + \norm{\vec{c}}_1 \norm{\vec{x} - \vec{y}}_1 \\
        &\leq \norm{f}_{\RBV^2(\R^d)} \, \norm{\vec{x} - \vec{y}}_1,
    \end{align*}
    where the third line follows from the fact that $\norm{\dummy}_\infty \leq \norm{\dummy}_1$ in finite-dimensional spaces and the fourth line follows from \cref{eq:direct-sum-norm}.
    
\end{proof}

\begin{proof}[Proof of \cref{lemma:Lipschitz-bound}]
    Write $f = (f_1, \ldots, f_D)$. For any $\vec{x}, \vec{y} \in \R^d$,
    \begin{align*}
        \norm{f(\vec{x}) - f(\vec{y})}_1
        &= \sum_{m=1}^D \abs{f_m(\vec{x}) - f_m(\vec{y})} \\
        &\leq \paren{\sum_{m=1}^D \norm{f_m}_{\RBV^2(\R^d)}} \norm{\vec{x} - \vec{y}}_1, \\
        &= \norm{f}_{\RBV^2(\R^d; \R^D)} \norm{\vec{x} - \vec{y}}_1,
    \end{align*}
    where the second line follows from \cref{lemma:Lipschitz-bound-scalar} and the third line follows from the definition of $\norm{\dummy}_{\RBV^2(\R^d; \R^D)}$ in \cref{lemma:vv-top-props}.
\end{proof}

\begin{remark}
    The Lipschitz bounds in \cref{lemma:Lipschitz-bound-scalar,lemma:Lipschitz-bound} are by no means the tightest Lipschitz bounds.
\end{remark}


\bibliographystyle{siamplain}
\bibliography{ref}

\end{document}